\title{Observability Properties of Colored Graphs}
\author{Mark Chilenski, George Cybenko~\IEEEmembership{Fellow,~IEEE,} Isaac Dekine, Piyush Kumar, and Gil Raz\IEEEcompsocitemizethanks{\IEEEcompsocthanksitem M.~Chilenski, I.~Dekine, P.~Kumar, and G.~Raz are with Systems \& Technology Research. \IEEEcompsocthanksitem G.~Cybenko is with Dartmouth College. \IEEEcompsocthanksitem Accepted version of Chilenski et al.\ ``Observability properties of colored graphs,'' \emph{IEEE Transactions on Network Science and Engineering}, 2019. DOI: \protect\url{https://doi.org/10.1109/TNSE.2019.2948474} \IEEEcompsocthanksitem \textcopyright\ 2019 IEEE. Personal use of this material is permitted. Permission from IEEE must be obtained for all other uses, in any current or future media, including reprinting/republishing this material for advertising or promotional purposes, creating new collective works, for resale or redistribution to servers or lists, or reuse of any copyrighted component of this work in other works.}}
\newcommand{\bluecircle}{\raisebox{-1pt}{\includegraphics{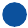}}\xspace}
\newcommand{\redsquare}{\raisebox{-1pt}{\includegraphics{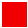}}\xspace}
\newcommand{\greendiamond}{\raisebox{-2.1pt}{\includegraphics{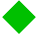}}\xspace}
\newcommand{\orangetriangle}{\raisebox{-1pt}{\includegraphics{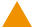}}\xspace}
\newcommand{\greyhex}{\raisebox{-1pt}{\includegraphics{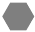}}\xspace}
\newcommand{\ap}{\emph{a posteriori}\xspace}
\newtheorem{theorem}{Theorem}
\begin{document}
\maketitle

\begin{abstract}
	A colored graph is a directed graph in which nodes or edges have been assigned colors that are not necessarily unique. Observability problems in such graphs consider whether an agent observing the colors of edges or nodes traversed on a path in the graph can determine which node they are at currently or which nodes were visited earlier in the traversal. Previous research efforts have identified several different notions of observability as well as the associated properties of graphs for which those observability properties hold. This paper unifies the prior work into a common framework with several new results about relationships between those notions and associated graph properties. The new framework provides an intuitive way to reason about the attainable accuracy as a function of lag and time spent observing, and identifies simple modifications to improve the observability of a given graph. We show that one form of the graph modification problem is in NP-Complete. The intuition of the new framework is borne out with numerical experiments. This work has implications for problems that can be described in terms of an agent traversing a colored graph, including the reconstruction of hidden states in a hidden Markov model (HMM).
\end{abstract}

\begin{IEEEkeywords}
	Graph theory, graph labeling, Markov processes, weak models, tracking.
\end{IEEEkeywords}

\section{Introduction}
\IEEEPARstart{C}{onsider} an agent traversing a directed graph whose nodes (or edges) are assigned one or more colors.
The agent seeks to localize itself within the graph based on the colors observed.
In such problems, the nodes can represent states, the edges 
can represent allowed state transitions, and the colors can represent the discrete symbols that can be emitted by a given state (in the case of node-colored graphs) or a given state transition (in the case of edge-colored graphs).  
Such models are used in a variety of applications, some of which are described in the following section.

We consider various formulations of inferring the nodes and edges visited by an agent when only the sequence of colors emitted can be observed directly.
What can be inferred about nodes and edges is collectively called an observability property.
Previous work has identified several classes of colored graphs, each having different implications for the tractability and accuracy of this inference problem \cite{ShengIEEE2005,CrespiACM2008,JungersDAM2011}.
Some of the related notions which have arisen in other contexts including symbolic dynamics \cite{Beal2011} and discrete event systems \cite{OzverenIEEE1990,SabooriIEEE2007,BryansIJIS2008,SabooriThesis2010} are nicely summarized in \cite{JungersDAM2011}.
The present work unites these concepts in a comprehensive framework based on the presence or absence of particular pathologies in the graph and its coloring.
We also provide a unified structure to reason about the effects of specific pathologies and to design mitigations that improve the observability properties of a given graph.

The readers should note that we are not studying the more challenging
problem of inferring the graph and its coloring from sequences of color observations which is a different computational problem
\cite{kearns1994cryptographic}.  In particular, we are assuming that the graph and its coloring are known \emph{a priori} and not being learned by the observer.

The rest of the paper is structured as follows: Section~\ref{sec:bg} describes our notation and reviews the previous work on colored graph models and observability classes.
Section~\ref{sec:pathos} presents the colored graph pathologies and discusses their implications and possible mitigations.
Section~\ref{sec:taxon} presents the relationships between colored graph observability classes.
Section~\ref{sec:examples} illustrates the implications of the various pathologies using simulations.
Finally, Section~\ref{sec:conc} summarizes the contributions of this paper and discusses potential directions for future work.

\section{Background and Previous Work}
\label{sec:bg}

\subsection{Definitions: Colored Graphs, Weak Models, and Hidden Markov Models}
\label{sec:bg:def}
A \emph{node-colored directed graph} $G = (V, E, L, \Phi)$ consists of a set of nodes $V$, a set of edges $E$ consisting of ordered pairs of nodes, a set of possible colors $\Phi$, and a mapping $L:V\to2^\Phi$ which indicates which subset of $\Phi$ can be emitted by a given node.
This is identical to the definition of a \emph{weak model} given in \cite{CrespiACM2008}.
(Note that some authors also include the set of nodes which the system can start at as part of their definition of a weak model \cite{JiangSPIE2004,ShengIEEE2005}.)

If each edge $(i,j)\in E$ is endowed with a transition probability $P_{ij}=P(X_{t+1}=j|X_{t}=i)$ (where $X_t$ is the node visited at time $t$) and each node $i\in V$ is endowed with a set of emission probabilities $B_{i\alpha}=P(Y_t=\alpha|X_t=i)$ (where $Y_t$ is the color emitted at time $t$ and $\alpha\in\Phi$), the model is a \emph{hidden Markov model} (HMM) with discrete symbols \cite{RabinerIEEE1989}.  Note that we are concerned here with structural properties of such systems that depend only on whether certain state transitions and emissions are possible or not, and not on the specific probabilities.
Consequently, our results are about whether certain inferences about colored graphs are true or not true, as opposed to what the probabilities or likelihoods of inferences are.

A node-colored graph is \emph{multi-colored} if there exists a $v\in V$ such that $|L(v)|>1$.
In the context of tracking an agent traversing the graph, the implication is that \emph{one} of the possible colors $c\in L(v)$ will be emitted when the agent visits node $v$.
It is often useful to reduce such a graph to the equivalent single-colored graph.
This can be accomplished by replacing every multi-colored node with multiple nodes, one for each color, then duplicating the appropriate edges.
This is illustrated in Figure~\ref{fig:mcTrans}.

\begin{figure}
	\centering
	\subfloat[Multi-colored]{\includegraphics[scale=0.15]{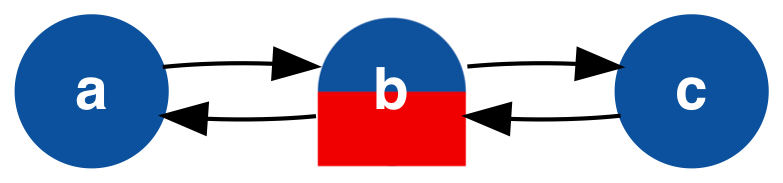}}
	\subfloat[Single-colored]{\includegraphics[scale=0.15]{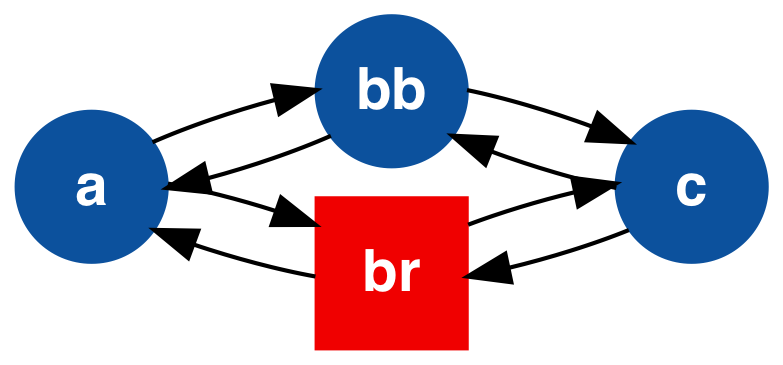}}
	\caption{Transformation of multi-colored graph to single-colored graph. Node $b$ can emit either \protect\bluecircle or \protect\redsquare, so it is split into nodes $bb$ and $br$, respectively. (This paper will use both color and shape to distinguish node ``colors.'')}
	\label{fig:mcTrans}
\end{figure}

An \emph{edge-colored directed graph} is defined as above, but instead associates the mapping to colors with the edges: $L:E\to 2^\Phi$.
As with the node-colored case, it is possible to have an \emph{edge-multi-colored graph} if there exists an $e\in E$ such that $|L(e)|>1$.
Edge-colored graphs can describe higher-order dependencies (e.g., they represent a system where the color emitted depends on both the current node and the previous node), so it is often useful to reduce an edge-colored graph to an equivalent node-colored graph.
This can be accomplished by replacing every node which has incident edges of more than one color by multiple nodes, one for each color, then assigning each node the color of its incident edges.
This is illustrated in Figure~\ref{fig:edgeTrans}.

Therefore, every node-multi-colored graph and every edge-colored graph, whether multi-colored or not, can be reduced to an equivalent node-colored graph for which each node emits only one color.  Moreover, the reduction results in modest growth of the graph.  Specifically, if there are $n$ nodes in the graph and the multi-colored node or edge with the most colors has $C$ colors, the resulting simply colored node graph will have no more than $Cn$ unicolored nodes.

Consequently, in the remainder of this paper, we consider, without loss of generality, node colored graphs for which each node can emit only one color.

\begin{figure}
	\centering
	\subfloat[Edge-multi-colored]{\includegraphics[scale=0.15]{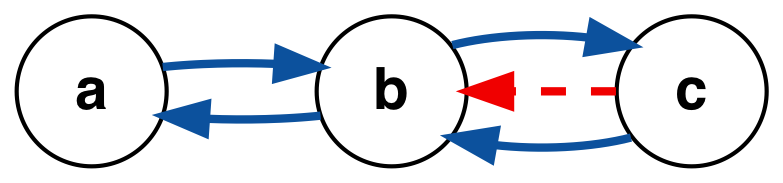}}
	\subfloat[Node-colored]{\includegraphics[scale=0.15]{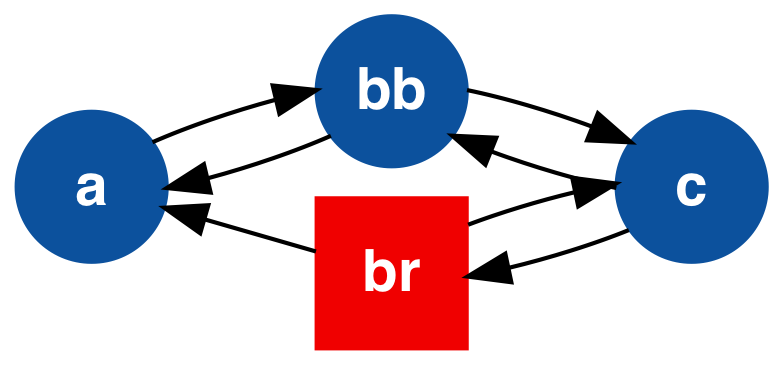}}
	\caption{Transformation of edge-multi-colored graph to node-colored graph. Node $b$ has incident edges which are both blue/solid and red/dashed, so it is split into blue/circle node $bb$ and red/square node $br$.}
	\label{fig:edgeTrans}
\end{figure}

\subsection{Example Applications}
\label{sec:apps}
Such models have been applied in a variety of contexts.
In our previous work in the cybersecurity domain, the nodes are the basic blocks of a software 
program's control flow graph (CFG), the edges are the allowed control flow transitions (dictated by jump, call, and return instructions in the program), and the colors are the specific signals emitted in a side-channel (e.g., electromagnetic emissions) \cite{ChilenskiSPIE2018,cybenko2018large}.
The colored graph model is used to reason about how accurately the program execution can be tracked in order to verify that the software is functioning as expected.

In a different cybersecurity context, colored graph models have been used as part of process query systems to detect attacks on computer networks \cite{BerkSPIE2005,ShengThesis2006,CybenkoC2007}.
Here, the nodes are the states of the attacker as they compromise various elements of the network and the colors are the signals produced by a variety of sensors used to instrument the network as part of its intrusion detection system.
In this context it is of interest to determine both which process (or processes, in the case of multiple simultaneous attacks) is running as well as the current state which any given process is in.

Previous work has also applied colored graph models to target tracking in sensor networks \cite{CrespiACM2008}.
Here the nodes are physical locations, the edges are the allowed movements from location to location, and the colors are the sensor signals.
The colored graph formalism allows reasoning about how well a target can be tracked given noisy/ambiguous sensor reports.


\subsection{Observability Classes}
\label{sec:obsClass}
Suppose that an agent is traversing a given node-colored graph, yielding a sequence of observed colors $Y_{1:t}=Y_1,\dots,Y_t$ from which we seek to infer something about the underlying state or node sequence
$X_{1:t}=X_1,\dots,X_t$ that generated
those observed colors.  A ``hypothesis'' in the context of such an observability problem is any
sequence $X_{1:t}=X_1,\dots,X_t$ of nodes that can be visited when traversing a directed path that emits the observed colors $Y_{1:t}=Y_1,\dots,Y_t$.  

Applications often distinguish between real-time tracking where we try to find $X_t$ given $Y_{1:t}$ and \ap reconstruction where we try to infer some part (even all) of the sequence of nodes $X_{1:t}$ given $Y_{1:t}$.  
In addition, different versions of the problem can make different assumptions about whether the start state $X_1$ is known or not.
In general, there is no guarantee that any of the nodes can be unambiguously identified, even \ap.

Previous work has identified a number of classes of colored graphs for which guarantees of varying strength can be made, however:
\begin{itemize}
	\item In a \emph{trackable} graph the number of hypotheses consistent with an observation sequence grows polynomially in the length of the observation sequence \cite{CrespiACM2008}. In a graph which is not trackable, the number of hypotheses grows exponentially.  It is known that the number of hypotheses can grow either polynomially or exponentially, with no intermediate growth rates possible \cite{CrespiACM2008}.
	\item In a \emph{unifilar} graph the current node $X_{t}$ is unambiguously determined given the previous node $X_{t-1}$ and the current color $Y_{t}$ \cite{ShengIEEE2005}. Furthermore, each node emits exactly one color and each color can be emitted by at most one of the starting nodes. (Thus, the start node is determined unambiguously
	by the initial observed color.) The implication of these constraints is that there is a one-to-one correspondence between color sequences and node sequences.
	\item In a \emph{partly {a posteriori} observable} graph it is possible, given a sufficiently long observation sequence, to unambiguously determine the state at at least one point in the past \cite{JungersDAM2011}.
	\item In a \emph{partly observable} graph there is an upper bound, $K$, on how much time there is between opportunities for $X_{t}$ to be unambiguously determined given the observation sequence $Y_{t_0:t}$, where $t_0 \leq t < t_0 + K$ \cite{JungersDAM2011}.
	\item In an \emph{observable} graph, the node $X_t$ can be unambiguously determined given the observation sequence $Y_{1:t}$, provided $t>T$ (where $T$ is a deterministic burn-in period determined by the structure of the graph) \cite{JungersDAM2011}.
\end{itemize}
While the previous work characterized these classes using a variety of approaches and definitions \cite{CrespiACM2008,JungersDAM2011,ShengIEEE2005}, Section~\ref{sec:taxon} shows that they can all be expressed in a common framework.

\subsection{Currency of Estimates}
Previous work on indexing systems (e.g., search engines) has characterized the quality of estimates in terms of $(\alpha,\beta)$-currency \cite{BrewingtonCN2000,BrewingtonC2000,BrewingtonThesis2000}.
Specifically, a quantity which was previously observed at time $t_0$ is said to be $\beta$-current at time $t$ if it has not changed between time $t_0$ and time $t-\beta$.
The quantity is said to be $(\alpha,\beta)$-current if it is $\beta$-current with probability $\alpha$.

Appropriately interpreted for the new domain, $(\alpha,\beta)$-currency provides a useful metric for characterizing the properties of the various observability classes.
Specifically, when tracking an agent traversing a colored graph, our estimate is said to be $(\alpha,\beta)$-current at time $t$ if we can correctly identify the node $X_{t-\beta}$ with probability $\alpha$.
Note that this definition does not make reference to the ``time of last observation,'' $t_0$, because there are no times at which the node is observed directly.
Instead, what matters is how long we have been observing the color sequence: the record length, $\gamma$.
Therefore, we say that an estimate is $(\alpha,\beta,\gamma)$-current if we can correctly identify the node $X_{t-\beta}$ with probability $\alpha$ using the observed colors $Y_{t-\gamma+1},\dots,Y_t$.
This is illustrated in Figure~\ref{fig:abg}.
\begin{figure}
	\centering
	\includegraphics{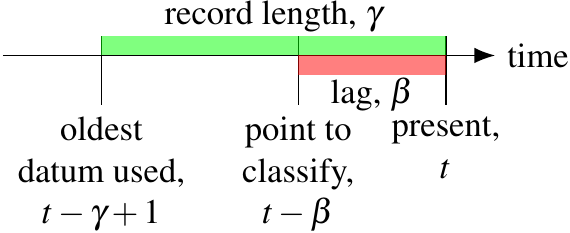}
	\caption{Illustration of $(\alpha,\beta,\gamma)$-currency. The record length $\gamma$ may be set either by the time at which observations started (in which case it grows at each time step) or by the finite number of observations stored in the tracking system's memory (in which case it is constant). The oldest datum used is at $t-\gamma+1$ because we are considering the discrete-time case and the number of observations is $\gamma$.}
	\label{fig:abg}
\end{figure}
As an example, an observable graph is $(1, 0, T+1)$-current.
Because of the coloring constraint on the set of starting nodes, a unifilar graph is $(1, 0, 1)$-current.
Other classes are more complicated; attainable values of $\alpha$, $\beta$, and $\gamma$ depend on the specific structure of the graph.

\section{Colored Graph Pathologies}
\label{sec:pathos}

\subsection{Description of the Pathologies}
Jungers and Blondel present polynomial-time algorithms to check whether a graph is observable or partly \ap observable by checking for the following properties \cite{JungersDAM2011}:
\newcounter{listHandoff}
\begin{enumerate}
	\item Presence/absence of nodes which have out-neighbors of the same color.\label{item:scon}
	\item Presence/absence of separated cycles having the same sequence of colors. Two cycles $\pi_1,\pi_2:\mathbb{Z}\to V$ indexed by $i$ and permitting the same sequence of colors (i.e., $L(\pi_1(i))\cap L(\pi_2(i))\neq\emptyset$ $\forall i$) are said to be \emph{separated} if $\pi_1(i)\neq\pi_2(i)$ for all steps $i$. (It is possible, however, to have $\pi_1(i)=\pi_2(j)$ for $i\neq j$; see Figure~\ref{fig:sepCyc}.)
	\setcounter{listHandoff}{\value{enumi}}
\end{enumerate}
In Section~\ref{sec:trackDef}, we show that the polynomial-time algorithm given by Crespi et al.\ to check whether a graph is trackable is equivalent to checking for the following additional property \cite{CrespiACM2008}:
\begin{enumerate}
	\setcounter{enumi}{\value{listHandoff}}
	\item Presence/absence of intersecting cycles having the same sequence of colors. Two cycles $\pi_1(i)$, $\pi_2(i)$ permitting the same sequence of colors are said to be \emph{intersecting} if there is at least one $i$ such that $\pi_1(i)=\pi_2(i)$. (We exclude the trivial case of identical cycles where $\pi_1(i)=\pi_2(i)$ $\forall i$.)
\end{enumerate}
\begin{figure}
	\centering
	\includegraphics[scale=0.15]{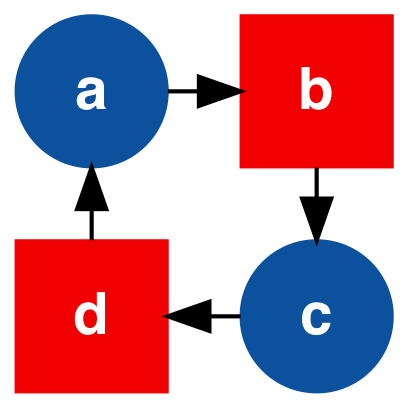}
	\caption{Graph with \emph{separated} cycles which share nodes. The cycles $\pi_1=(a, b, c, d, a,\dots)$ and $\pi_2=(c, d, a, b, c,\dots)$ have the same sequence of colors and involve the same nodes, but there is no $i$ such that $\pi_1(i)=\pi_2(i)$.}
	\label{fig:sepCyc}
\end{figure}
Note that intersecting cycles with the same coloring are a subset of same-colored out-neighbors: because a pair such cycles must intersect, there must be some $i$ such that $\pi_1(i)=\pi_2(i)$ and $\pi_1(i+1)\neq\pi_2(i+1)$.
(Recall that we exclude identical cycles.)
The fact that $\pi_1$ and $\pi_2$ are taken to permit the same sequence of colors means that $\pi_1(i+1)$ and $\pi_2(i+1)$ are same-colored out-neighbors of the branching point $\pi_1(i)=\pi_2(i)$.
But, as shown by the examples in Figure~\ref{fig:taxonExamples}, not all same-colored out-neighbors are part of intersecting cycles with the same coloring.

This enumeration suggests that the three pathologies listed in Table~\ref{tab:pathos} are useful for characterizing the various observability classes.
\begin{table*}
	\centering
	\caption{Colored Graph Pathologies, Examples, Effects, and Mitigations}
	\makebox[\textwidth][c]{%
	\begin{tabular}{p{1in}cp{1.5in}c}
		\toprule
		Name & Example & Effect & Mitigation\\
		\midrule
		Same-colored out-neighbors & \raisebox{-0.8\totalheight}{\includegraphics[scale=0.1]{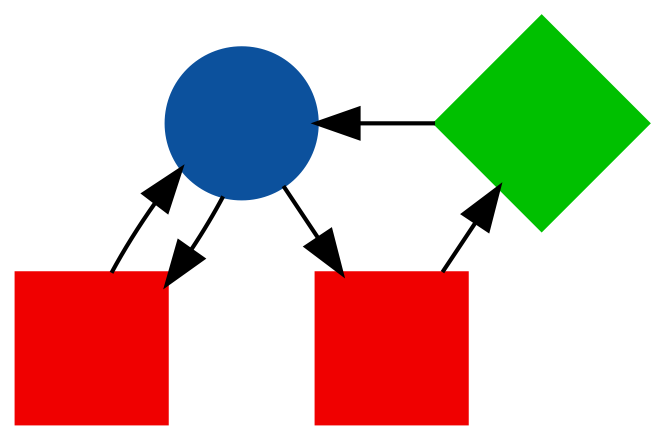}} & Lose track, may be able to reconstruct \emph{a posteriori}, increases $\beta$ for given $\alpha$ & \raisebox{-0.8\totalheight}{\includegraphics[scale=0.1]{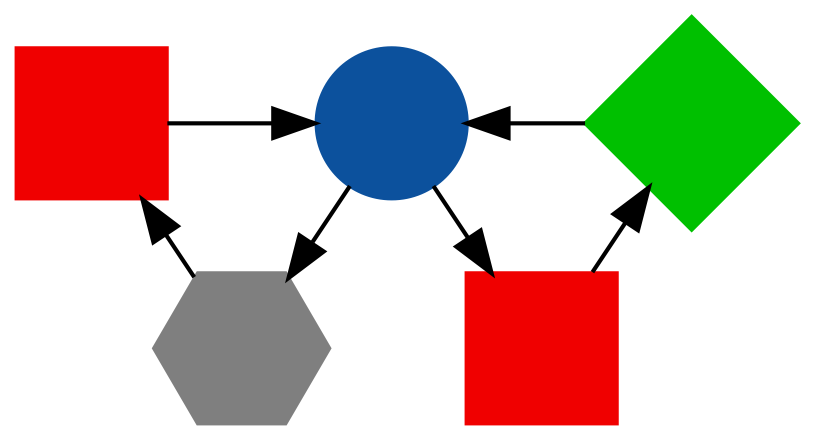}}\\
		\midrule
		Intersecting cycles with same coloring & \raisebox{-0.8\totalheight}{\includegraphics[scale=0.1]{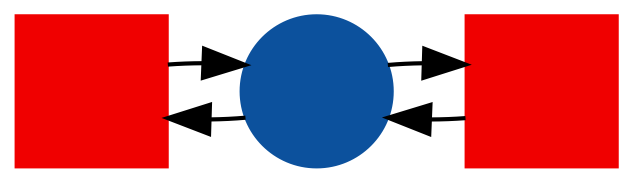}} & Lose track, may never be able to reconstruct, decreases maximum attainable $\alpha$ & \raisebox{-0.8\totalheight}{\includegraphics[scale=0.1]{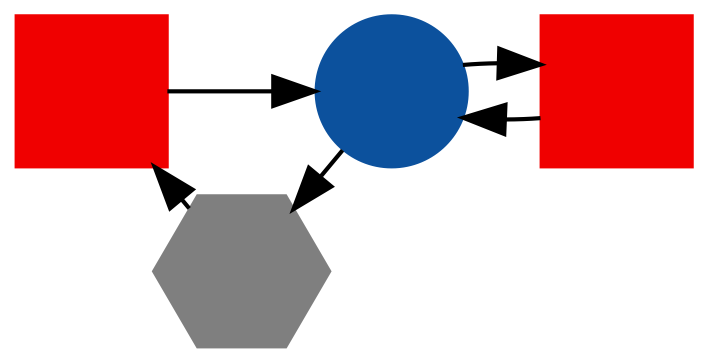}}\\
		\midrule
		Separated cycles with same coloring & \raisebox{-0.8\totalheight}{\includegraphics[scale=0.1]{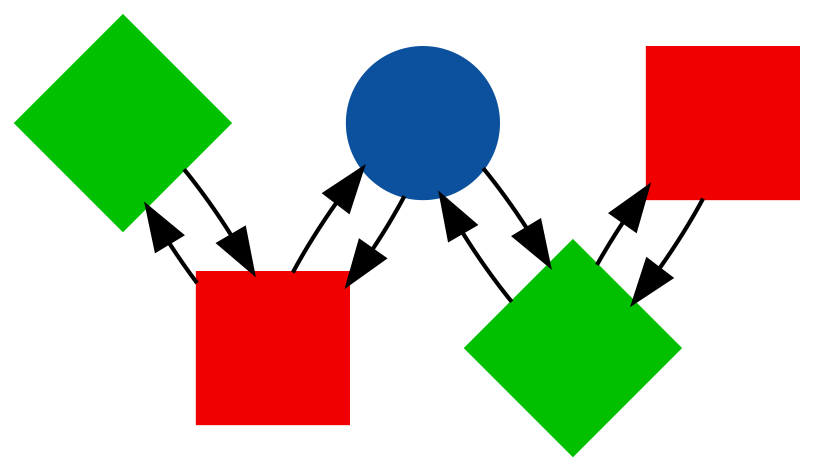}} & Increases ``burn-in'' time before nodes can be reconstructed unambiguously, increases required $\gamma$ for a given $\alpha$ & \raisebox{-0.8\totalheight}{\includegraphics[scale=0.1]{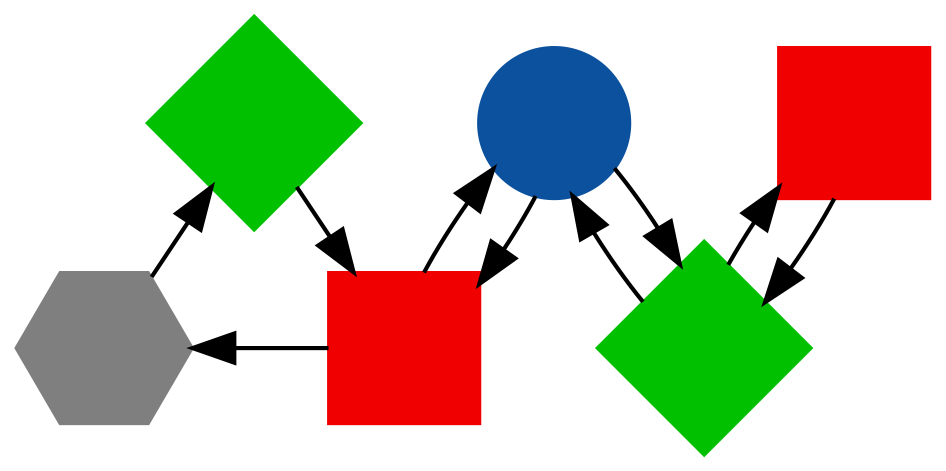}}\\
		\bottomrule
	\end{tabular}
	}
	\label{tab:pathos}
\end{table*}
In fact, in Section~\ref{sec:taxon} we show that these three pathologies are sufficient to describe observable, partly \ap observable, trackable, and a looser class of unifilar graphs (i.e., without the constraint on the set of starting nodes).
Partly observable graphs do not fit quite as cleanly into this framework as the others, but some cases can be characterized by the absence of a specific type of same-colored out-neighbor described in Section~\ref{sec:PartObs}.

\subsection{Effects of the Pathologies}
In loose terms, the effects of each pathology are:
\begin{itemize}
	\item \emph{Same-colored out-neighbors} cause tracking to be lost once they are encountered. But, it will often be possible to reconstruct which branch was taken \ap. For example, in the top row of Table~\ref{tab:pathos}, once \redsquare is seen, we do not know which node the agent is at. But, the next observation will either be \bluecircle (in which case we know the previous step took the left-hand branch) or \greendiamond (in which case we know the previous step took the right-hand branch). Therefore, the net effect on $(\alpha,\beta,\gamma)$-currency is to increase the lag $\beta$ necessary to obtain a given accuracy $\alpha$. Provided that $\gamma>\beta$, the record length $\gamma$ will have no effect on the ability to determine which branch was taken.
	\item \emph{Intersecting cycles with the same coloring} is a special case of same-colored out-neighbors which causes tracking to be lost in a way which can prevent even \ap reconstruction of the visited nodes. For example, in the middle row of Table~\ref{tab:pathos}, the observations will always be an alternating sequence of \redsquare and \bluecircle, but it will never be possible to determine which of the \redsquare nodes was visited. Therefore, the net effect on $(\alpha,\beta,\gamma)$-currency is to decrease the accuracy $\alpha$ which can be obtained for any lag $\beta$ or record length $\gamma$.
	\item \emph{Separated cycles with the same coloring} increase the ``burn-in'' time for which colors must be observed before a node (past or present) can be identified unambiguously. For example, consider a sequence of observations from the graph in the bottom row of Table~\ref{tab:pathos} consisting of alternating \greendiamond and \redsquare. Until \bluecircle is observed, it is not possible to know which side of the graph the agent is on. But, as soon as one of the sequences (\redsquare, \bluecircle) or (\greendiamond, \bluecircle) is observed, we know not only where the agent is, but where it was at all previous times.
\end{itemize}
When there is no risk of confusion, the words ``with the same coloring'' will be omitted when referring to intersecting and separated cycles.

\subsection{Mitigating the Pathologies}
Enumerating the pathologies is useful not just to understand their implications, but also to create ways of mitigating their deleterious effects on tracking performance.
A simple way to modify a colored graph's observability class without significantly changing the functionality of the underlying system is to add uniquely-colored but otherwise non-functional ``indicator nodes'' at strategic locations.
These are indicated by the grey hexagons (\greyhex) in the last column of Table~\ref{tab:pathos}.
This approach was demonstrated experimentally in \cite{ChilenskiSPIE2018}, and further simulated examples are given in Section~\ref{sec:examples}.

The basic idea is to insert an indicator node either just before a same-colored out-neighbor or in a cycle with the same coloring in order to remove the pathology.
Because the addition of an indicator node causes a delay in the traversal of the graph, it is desirable to put the indicator nodes in parts of the graph which are less-frequently visited.
In an HMM, one can determine the long-run frequency of each transition in order to decide where to place indicator nodes.

In principle it is also possible to change the observability class through the deletion of nodes and/or edges, instead of the insertion of nodes into existing edges discussed above.
This could correspond to restructuring the system to avoid certain ambiguous behaviors.
But, because this will clearly affect the functionality of the system more than the addition of indicator nodes, we do not consider these cases any further here.

\section{Pathology-Based Taxonomy of Colored Graph Observability Classes}
\label{sec:taxon}

A taxonomy of colored graph classes based on the presence/absence of the graph pathologies is given in Figure~\ref{fig:taxon}.
Simple example graphs from each region of the Venn diagram are given in Figure~\ref{fig:taxonExamples}.
The full reasoning for each class/region is given in the following subsections.

\begin{figure*}
	\centering
	\includegraphics{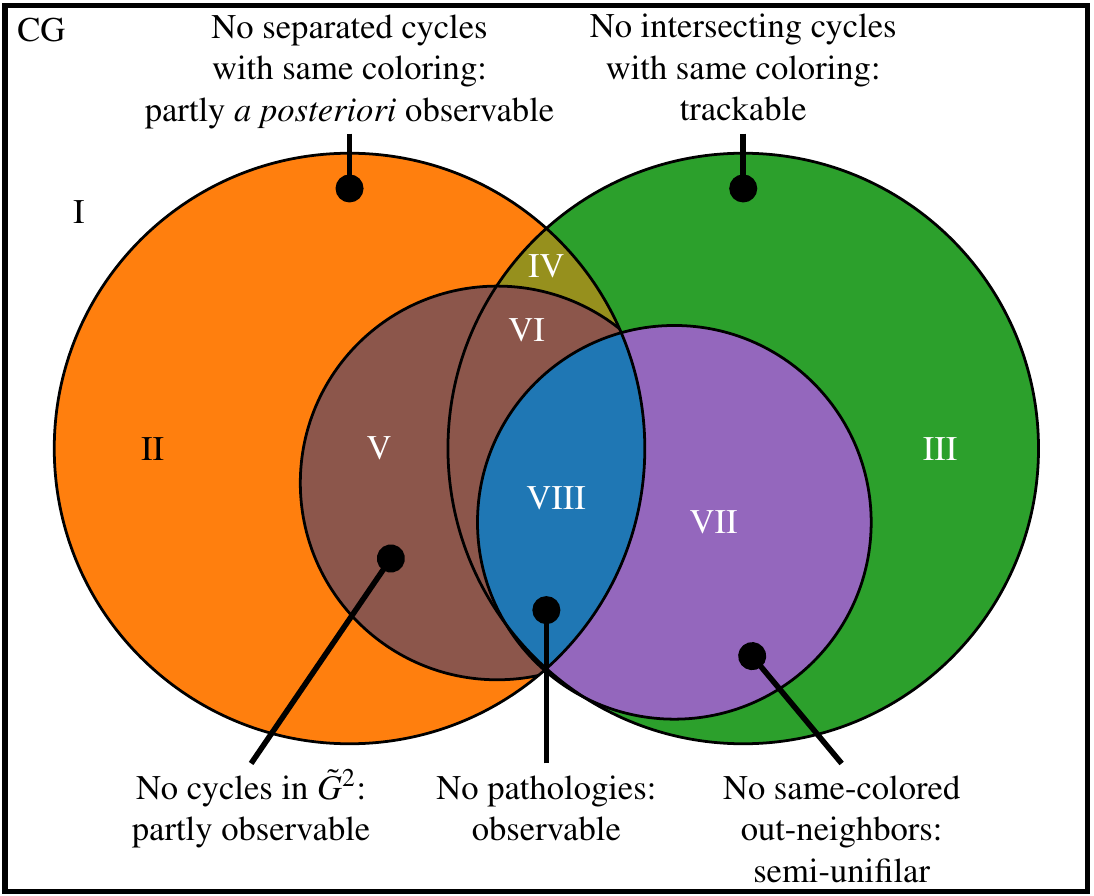}
	\caption{Venn diagram depicting the taxonomy of colored graphs. Roman numerals in each portion of the diagram are used to refer to the different portions throughout the text. The ``CG'' in the upper left (region I) stands for ``colored graph'' -- the universe of all possible colored graphs.}
	\label{fig:taxon}
\end{figure*}

\begin{figure}
	\centering
	\subfloat[Region I: all pathologies]{\includegraphics[scale=0.125]{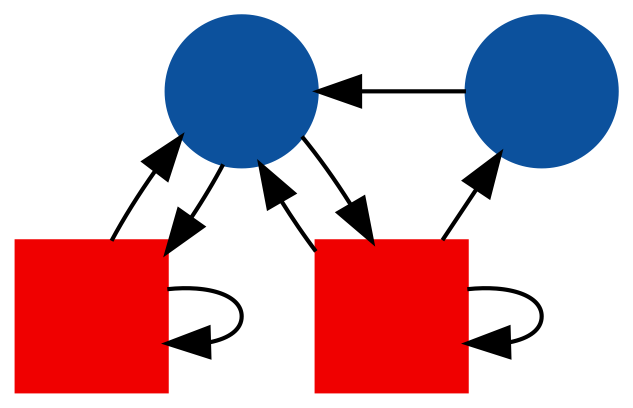}}
	\quad\quad%
	\subfloat[Region II: no separated cycles: partly \ap observable]{\includegraphics[scale=0.125]{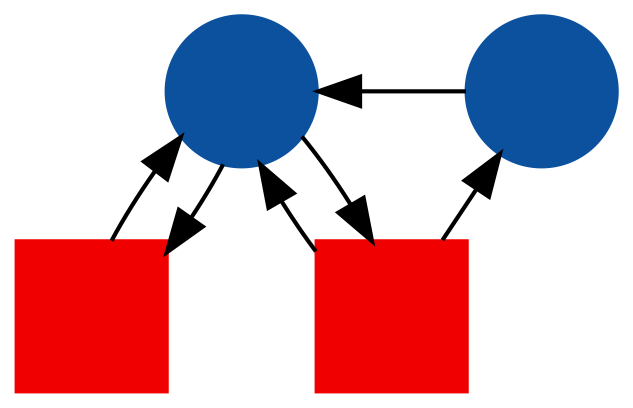}}
	
	\subfloat[Region III: no intersecting cycles: trackable]{\includegraphics[scale=0.125]{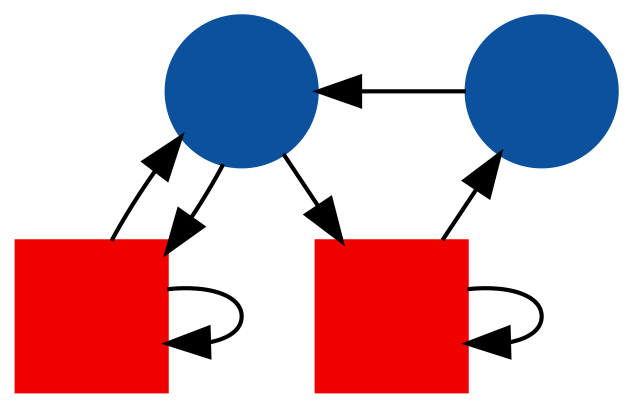}}
	\quad\quad%
	\subfloat[Region IV: no separated or intersecting cycles: partly \ap observable and trackable]{\includegraphics[scale=0.125]{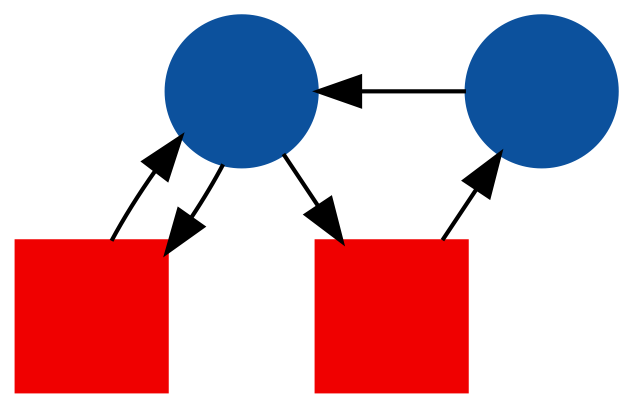}\label{sf:r4}}
	
	\subfloat[Region V: acyclic $\tilde{G}^2$: partly observable]{\includegraphics[scale=0.125]{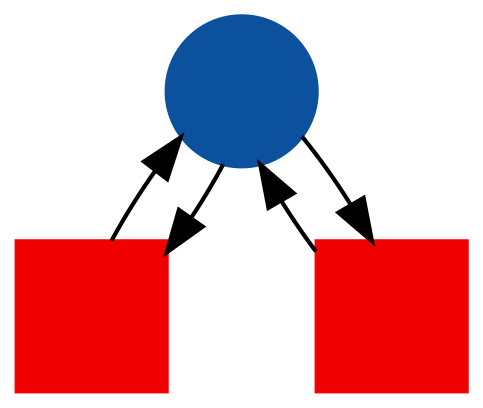}\label{sf:r5a}}
	\quad\quad%
	\subfloat[Region VI: acyclic $\tilde{G}^2$ and no intersecting cycles: partly observable and trackable]{\includegraphics[scale=0.125]{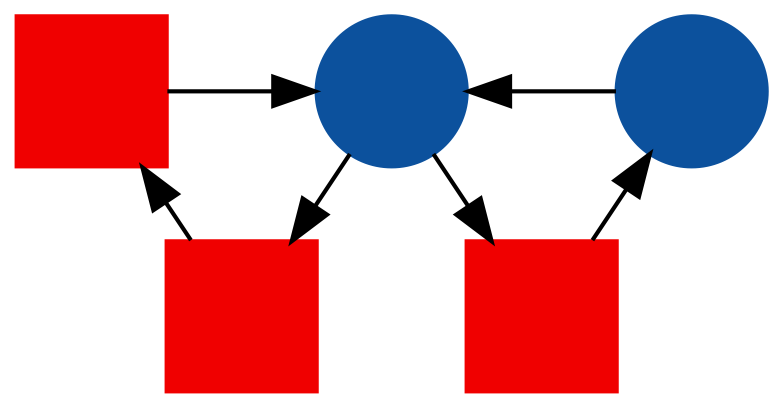}}
	
	\subfloat[Region VII: no same-colored out-neighbors: semi-unifilar]{\includegraphics[scale=0.125]{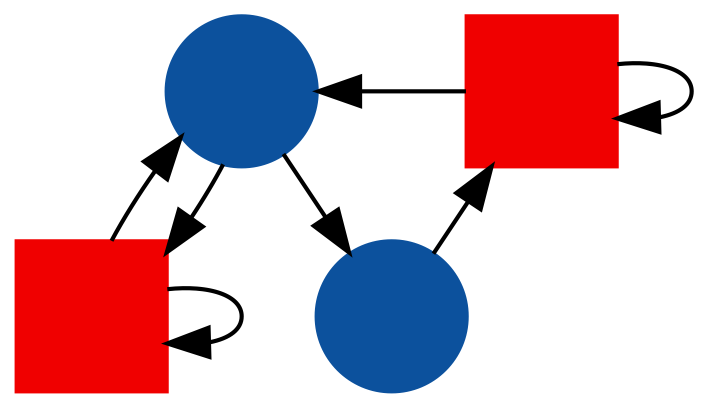}\label{sf:r6}}
	\quad\quad%
	\subfloat[Region VIII: no pathologies: observable]{\includegraphics[scale=0.125]{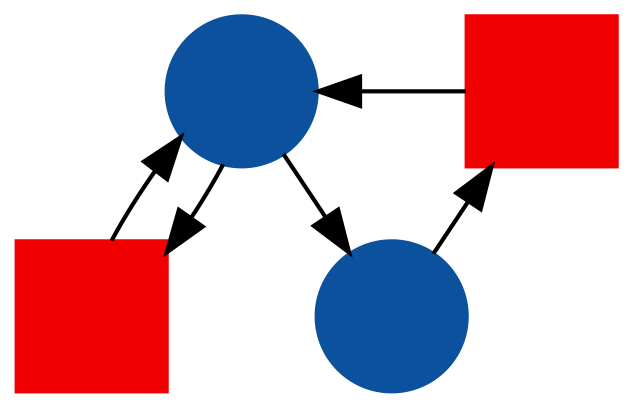}}
	
	\caption{Example graphs for each region of Figure~\ref{fig:taxon}.}
	\label{fig:taxonExamples}
\end{figure}

\subsection{Region I: General Colored Graphs}
The outer part of the Venn diagram is the universe of all possible colored graphs, which may have all of the pathologies represented, and for which no performance guarantees can be made.

\subsection{Region II: Partly \emph{a Posteriori} Observable}
A graph is partly \ap observable if there are no separated cycles with the same color sequence \cite{JungersDAM2011}.

To check if a colored graph $G=(V, E, L, \Phi)$ possesses this property, construct the auxiliary graph $G^2$ whose nodes are of the form $(v_1, v_2)$, where $v_1,v_2\in V$ and $v_1\neq v_2$.
The auxiliary graph contains an edge $((v_1, v_2), (v_1', v_2'))$ if $(v_1, v_1')\in E$, $(v_2, v_2')\in E$, and $v_1'$ and $v_2'$ have the same color (i.e., $L(v_1')\cap L(v_2')\neq\emptyset$).
If $G^2$ is acyclic, then $G$ contains no separated cycles with the same color sequence \cite{JungersDAM2011}.

\subsection{Region III: Trackable}
\label{sec:trackDef}
Crespi et al.\ characterize trackability by considering the node sequences consistent with all possible color sequences \cite{CrespiACM2008}.
A graph is trackable if and only if, for each possible color sequence, there is at most one path from each node $v$ at time $t_1$ back to itself at time $t_2$ which is consistent with the color sequence (this is the ``unique path property'').
But, if two different paths begin and end at the same node and have the same color sequence, then they form a pair of intersecting cycles with the same coloring.
Therefore, the absence of such cycles is a necessary and sufficient condition for a graph to be trackable.

\subsection{Region IV: Partly \emph{a Posteriori} Observable and Trackable}
As shown by the example in Figure~\ref{sf:r4}, it is possible for a graph to lack both separated and intersecting cycles with the same coloring but to not belong to any of the more restrictive classes.
Namely, the absence of both separated and intersecting cycles with the same color sequence is not a sufficient condition for a graph to be partly observable.
(Nor is it a necessary condition, see Figure~\ref{sf:r5a}.)

\subsection{Regions V and VI: Partly Observable}
\label{sec:PartObs}
Partial observability is characterized by another auxiliary graph, $\tilde{G}^2$ \cite{JungersDAM2011}.
To construct $\tilde{G}^2$, add the edge $((v_1, v_2), (v_1', v_2'))$ to $G^2$ if the following three conditions are met:
\begin{enumerate}
	\item $G$ has edge $(v_1, v_1')$ or $(v_2, v_1')$
	\item $G$ has edge $(v_1, v_2')$ or $(v_2, v_2')$
	\item $v_1'$ and $v_2'$ have the same color (i.e., $L(v_1')\cap L(v_2')\neq \emptyset$)
\end{enumerate}
A graph is partly observable if $\tilde{G}^2$ is acyclic.
Because $\tilde{G}^2$ is a supergraph of $G^2$, it has at least as many cycles as $G^2$ and hence partly observable is a subset of partly \ap observable.

Graphs which are partly \ap observable but not partly observable (i.e., $G^2$ is acyclic but $\tilde{G}^2$ is not) appear to be characterized by a specific class of same-colored out-neighbors similar to the example in Figure~\ref{sf:r4}: there is a cycle connected to a path which permits the same sequence of colors as the cycle, but the path ends at a different node and hence does not form an intersecting cycle.
The net effect of this configuration is to permit color sequences of arbitrary length with ambiguous endpoints, thereby violating the conditions for partial observability.
Specifically, in Figure~\ref{sf:r4}, we cannot know in real-time when a sequence of the form (\bluecircle, \redsquare, \bluecircle, \redsquare, \dots) has transitioned from the left-hand branch to the right-hand branch.
But, once we see the sequence (\bluecircle, \redsquare) twice in a row, we know that the \emph{previous} two nodes were in the left-hand branch, but remain uncertain about which branch the agent is currently on.
Therefore, the net effect of this pathology is simply to increase the lag $\beta$ for correct identification of nodes.
Because the effect on $(\alpha,\beta,\gamma)$-currency is identical to the more general class of same-colored out-neighbors which do not form intersecting cycles, we have chosen to not include this as a specific pathology in Table~\ref{tab:pathos}, but simply include partly observable as a subset in our taxonomy.

As noted in \cite{JungersDAM2011}, partly observable graphs may or may not be trackable.
We have designated these cases regions V and VI, respectively.

\subsection{Region VII: Semi-Unifilar}
A graph $G=(V, E, L, \Phi)$ is unifilar if the following three conditions are met \cite{ShengIEEE2005}:
\begin{enumerate}
	\item Each node $v\in V$ emits exactly one color: $|L(v)|=1$.\label{item:unif1}
	\item For each node $v\in V$ and each color $c\in\Phi$, there is at most one out-neighbor of $v$ which can emit $c$.\label{item:unif2}
	\item For each color $c\in\Phi$, the agent can start at at most one node which emits $c$.\label{item:unif3}
\end{enumerate}
Condition~\ref{item:unif1} is not necessary for the favorable tracking properties of unifilar graphs described in Section~\ref{sec:obsClass}.
(It is used in \cite{ShengIEEE2005} to establish bounds on the rate of growth of the set of possible color sequences from weak models.)
Condition~\ref{item:unif2} simply states that there are no same-colored out-neighbors.
Our taxonomy does not consider the set of nodes which the agent is permitted to start at, so we do not consider condition~\ref{item:unif3}.
Therefore, we call a graph \emph{semi-unifilar} if it satisfies condition~\ref{item:unif2} (no same-colored out-neighbors).

Semi-unifilar graphs have the property that, once a node is identified unambiguously, all nodes from then on will also be identified unambiguously.
But, semi-unifilar graphs do not have any guarantees that you will be able to perform this initial localization.
For example, the graph in Figure~\ref{fig:sepCyc} is semi-unifilar but, because of its symmetry, it will never be possible to unambiguously identify any nodes without additional information beyond the observed color sequence.

Because intersecting cycles with the same coloring are a specific case of same-colored out-neighbors, semi-unifilar is clearly a subset of trackable.
Contradicting the assertion that unifilar graphs are a particular case of observable graphs in Section~1 of \cite{JungersDAM2011}, we note that unifilar (and semi-unifilar) graphs can have separated cycles with the same coloring, and are therefore in fact a superset of observable graphs.
An example of a semi-unifilar graph which is not observable is given in Figure~\ref{sf:r6}.

\subsection{Region VIII: Observable}
A graph is observable if it lacks both separated cycles with the same coloring and same-colored out-neighbors \cite{JungersDAM2011}.
In other words, observable graphs are pathology-free.

\section{Optimal Indicator Node Placement Is NP-Complete}
\label{sec:INSP}
In this section, we consider the problem of making an arbitrary graph partly \ap observable by adding some number of indicator nodes.
We show that this problem is in NP-Complete, but leave open the questions of obtaining other observability-type properties through the
insertion of indicator nodes.
Recall that adding an indicator node means replacing an edge $(v_i,v_j)$
with a new node $v_{ij}$ and new edges $(v_i,v_{ij})$ and $(v_{ij},v_j)$.  Table~\ref{tab:pathos} illustrates the concept with three examples.
To be precise, consider the following formulation of the problem.  

\begin{quote}
{\em Indicator Node Selection Problem}: Given a node-colored directed graph, $G=(V,E,L,\Phi)$, and a subset of edges, $F \subseteq E$, can indicator nodes be added to some
edges in $F$ so that the resulting graph is partly \ap observable?
\end{quote}

\begin{theorem}
	The Indicator Node Selection Problem is in NP-Complete. \label{INSP-theorem}
\end{theorem}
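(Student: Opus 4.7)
The plan is to establish NP-completeness in the two standard steps: membership in NP via a polynomial-time verifier, and NP-hardness via a polynomial-time many-one reduction from a known NP-complete problem.

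For membership in NP, I would let the certificate be a subset $F' \subseteq F$ specifying which edges receive indicators. The verifier constructs the modified graph $G_{\text{new}}$ in linear time by replacing each $(v_i, v_j) \in F'$ with the pair of edges $(v_i, v_{ij})$ and $(v_{ij}, v_j)$ through a freshly-colored indicator node $v_{ij}$, and then applies the Jungers--Blondel characterization recalled in Section~\ref{sec:taxon}: construct the auxiliary graph $(G_{\text{new}})^2$ and test it for acyclicity by, e.g., depth-first search. All of these steps are polynomial in $|V|+|E|$, so the verifier runs in polynomial time.

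For NP-hardness I would reduce from \textsc{Vertex Cover}. Given $H=(V_H,E_H)$ with budget $k$, I plan to construct a colored graph $G_H$ in which each vertex $v \in V_H$ corresponds to a small gadget containing a single distinguished ``bottleneck'' edge $e_v$, and set $F = \{e_v : v \in V_H\}$. For each $\{u,v\} \in E_H$ I would assign per-edge colors $\alpha_{\{u,v\}},\beta_{\{u,v\}}$ (unique to that edge of $H$) to endpoints on the two gadgets, so that $\{u,v\}$ induces a pair of separated cycles in $G_H$ with the same color sequence, one passing through $e_u$ and the other through $e_v$. Each such pair yields a cycle in $(G_H)^2$. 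Inserting an indicator into $e_v$ destroys precisely the $(G_H)^2$-cycles associated with edges of $H$ incident to $v$, because the fresh indicator color forbids any color match at the corresponding step. Hence $F' \subseteq F$ makes $G_H$ partly \ap observable iff $\{v \in V_H : e_v \in F'\}$ is a vertex cover of $H$, and the budget transfers directly. Multi-colored endpoints introduced during the construction can be reduced to single-colored nodes via the transformation in Section~\ref{sec:bg:def} with only polynomial blow-up.

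The hardest step, and the one I would be most careful about, is proving tightness of the gadget: namely, that the cycles of $(G_H)^2$ are \emph{exactly} those in bijection with $E_H$, so that every vertex cover induces a valid $F'$ and, crucially, every valid $F'$ induces a vertex cover. This requires ruling out not only unintended pairings that share some color $\alpha_{\{u,v\}}$ across unrelated gadgets, but also compound cycle pairs formed by concatenating several elementary traversals, which could in principle bypass the intended bijection. The per-edge uniqueness of $\alpha_{\{u,v\}}$ and $\beta_{\{u,v\}}$ is engineered precisely to preclude such pairings, but the combinatorial argument that no concatenation of elementary cycles produces a new separated-cycle pair must be carried out carefully and constitutes the bulk of the correctness proof.
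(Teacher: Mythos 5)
Your NP-membership argument is fine (guess the subset of $F$, build the modified graph, test acyclicity of its auxiliary graph $G^2$ in polynomial time), but your hardness reduction does not work for the problem as actually stated. The Indicator Node Selection Problem carries no budget parameter: it asks only whether \emph{some} subset of $F$ can receive indicator nodes so that the result is partly \emph{a posteriori} observable. Your reduction from \textsc{Vertex Cover} hinges on ``the budget transfers directly,'' but there is nothing in the target problem for $k$ to transfer to. Worse, in your gadget an inserted indicator only ever \emph{destroys} cycles of $(G_H)^2$ --- you use freshly-colored indicators, which can never color-match anything --- so the assignment that puts an indicator on every edge of $F$ is always feasible, and every instance you construct is a ``yes'' instance. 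The paper explicitly flags this: with uniquely-colored indicator nodes the problem is trivial.

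The missing idea is that hardness must come from non-monotonicity. When all indicator nodes share a single color, an indicator on one path can color-match an indicator at the corresponding step of a parallel path, so inserting indicators can \emph{recreate} a separated-cycle pathology just as easily as remove one. The paper's reduction is accordingly from the Monochromatic-Triangle Problem rather than from a covering problem: each triangle of the source graph becomes a gadget with three ``real edges'' plus three copies of the triangle node, arranged so that if all three real edges agree (all have indicators or none do) then two of the three root-to-bottom paths are identically decorated, and these are closed off through long black/white linear arrays and binary trees into a pair of same-colored separated cycles. The constructed graph is partly \emph{a posteriori} observable iff every triangle has its real edges \emph{not all equal}, i.e., iff the source graph admits an edge two-coloring with no monochromatic triangle. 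A not-all-equal constraint is something an unbudgeted feasibility question can encode; a cardinality-bounded covering constraint is not. To repair your approach you would need either to encode the budget into the graph structure itself or to switch to a source problem whose constraints are of not-all-equal type, which is essentially what the paper does.
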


The proof is deferred to the Appendix (part of the supplemental material) because the reduction is detailed and would detract from the
present narrative.  But we note here that the very question of whether inserting
indicator nodes into a given subset of edges can make the resulting graph partly observable is in NP-Complete
without requiring that the number of added indicator nodes be minimal.  Moreover, the proof of Theorem~\ref{INSP-theorem} will show that the result is true irrespective of what color the indicator nodes are -- 
they can be one of the existing colors in $L$ or an entirely new color.

However, the proof does not address the problem of adding indicator nodes with arbitrary numbers
of new colors nor the problem of repeatedly adding indicator nodes to the same original edge
because those cases are trivial.
To see this, note that every edge in $G$ could have a single uniquely colored indicator node color inserted or
a unique number of same colored indicator nodes inserted to make each edge effectively uniquely colored.

\section{Numerical Experiments: Changing Graph Class With Indicator Nodes}
\label{sec:examples}
In order to illustrate the effects of the pathologies and mitigations, we have conducted a series of numerical experiments using the graph shown in Figure~\ref{sf:base}.
This graph has all three pathologies present, and hence is expected to have poor tracking performance.

\begin{figure*}
	\centering
	\makebox[\textwidth][c]{
	\subfloat[Base case: all pathologies]{\includegraphics[scale=0.15]{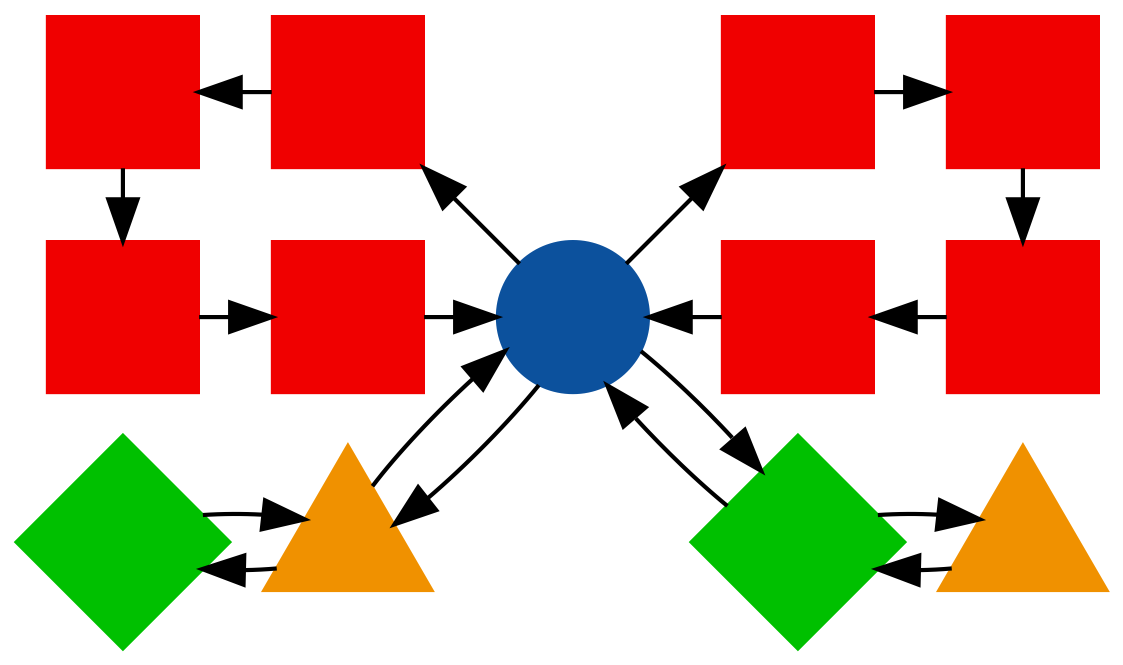}\label{sf:base}}
	\quad\quad
	\subfloat[Mitigate intersecting cycles: trackable]{\includegraphics[scale=0.15]{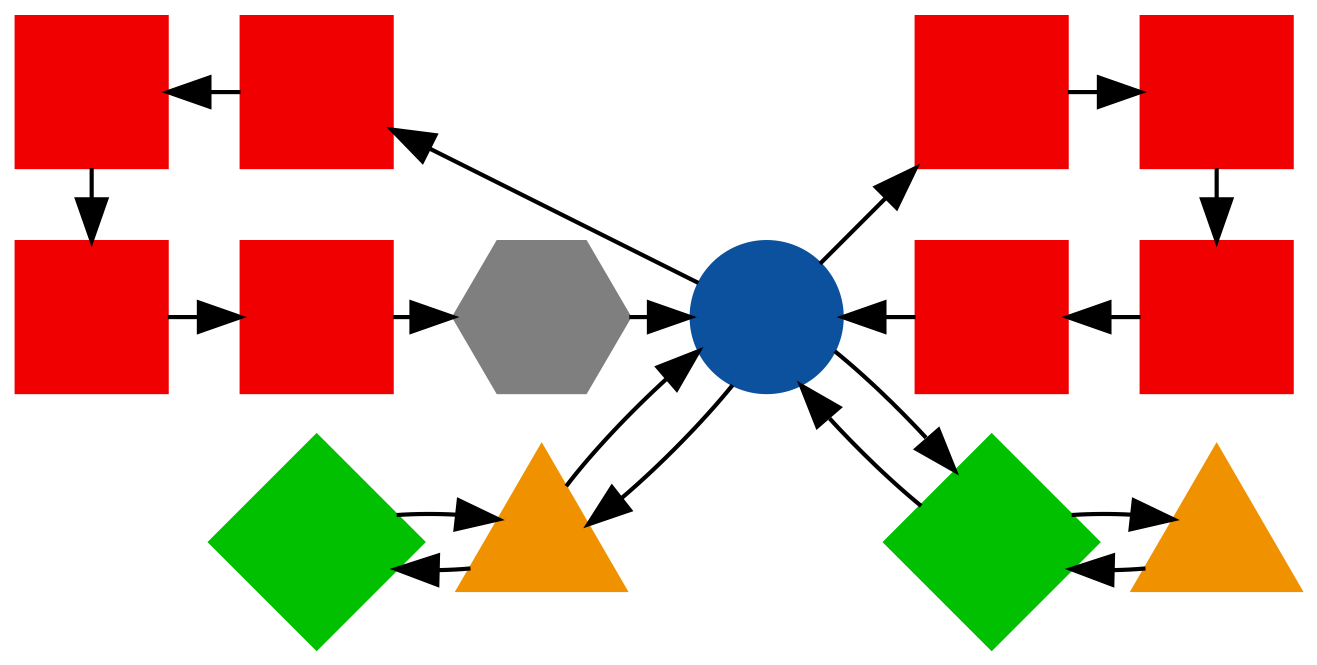}\label{sf:trackMit}}
	}
	
	\makebox[\textwidth][c]{
	\subfloat[Mitigate same-colored out-neighbors: semi-unifilar]{\includegraphics[scale=0.15]{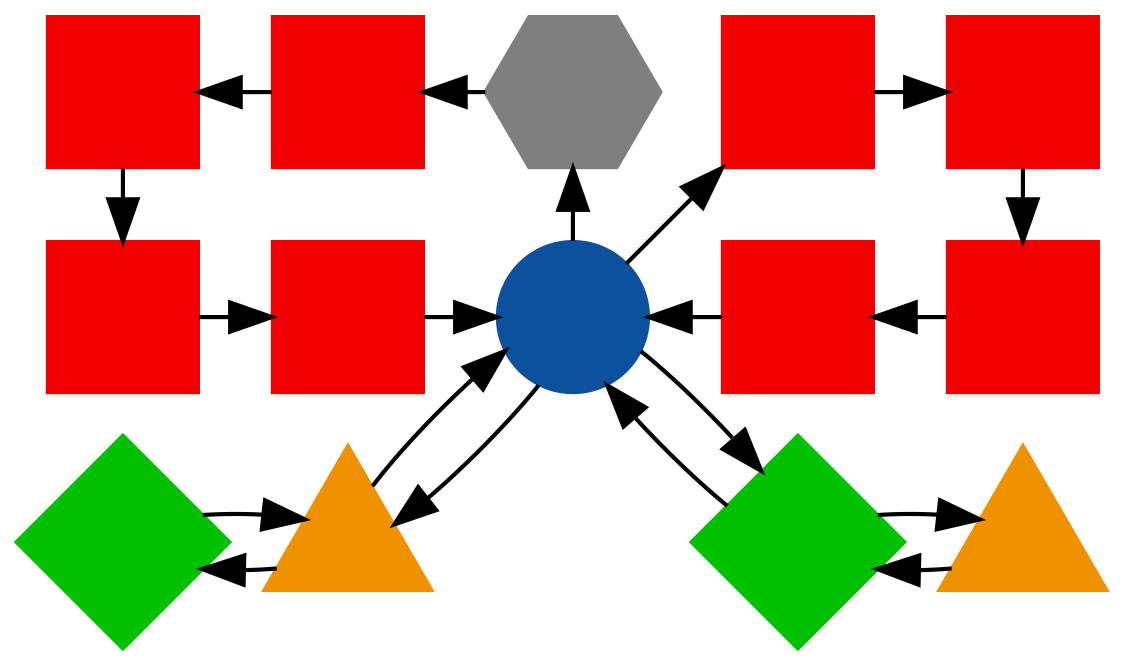}\label{sf:unifMit}}
	\quad\quad
	\subfloat[Mitigate separated cycles: observable]{\includegraphics[scale=0.15]{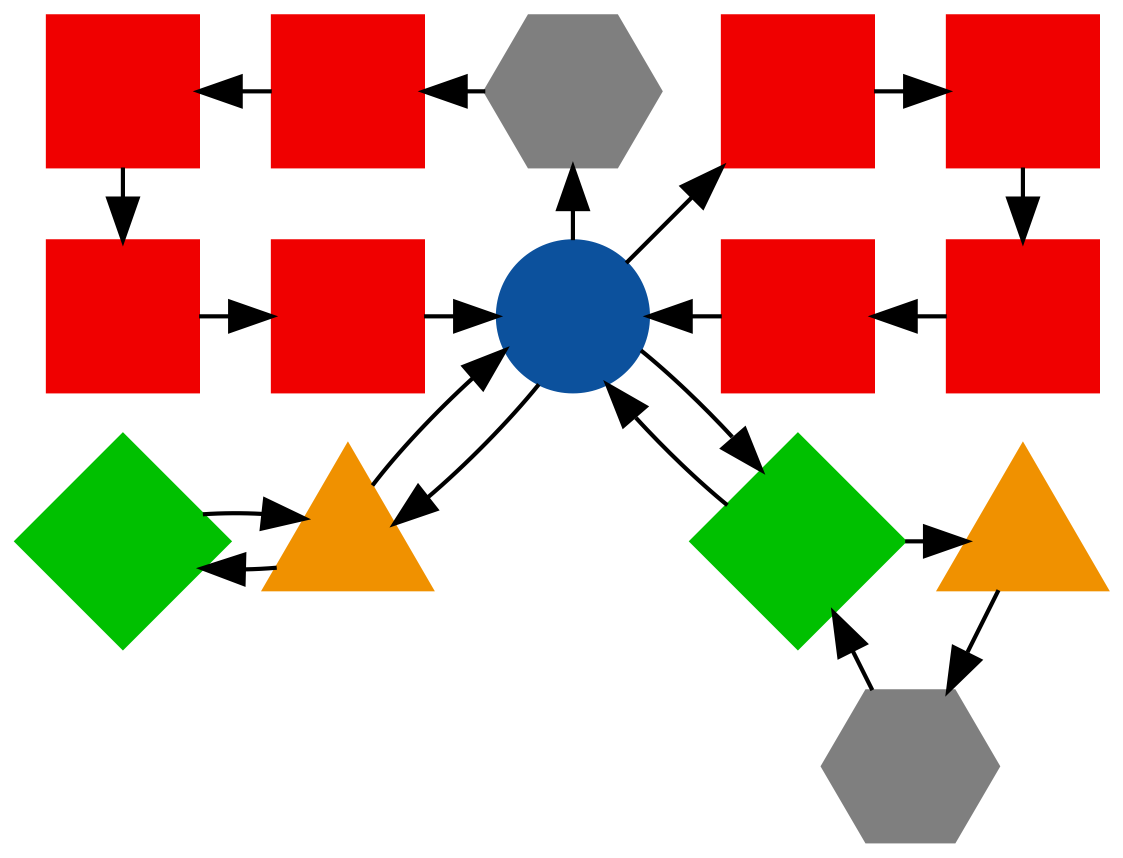}\label{sf:obsMit}}
	}
	\caption{Successive mitigations of colored graph pathologies.}
	\label{fig:butterfly}
\end{figure*}

\subsection{Improving Tracking Performance: Trackable and Semi-Unifilar}
The base case shown in Figure~\ref{sf:base} has a pair of intersecting cycles of the form (\bluecircle, \redsquare, \redsquare, \redsquare, \redsquare).
These will prevent reconstruction of which nodes were visited: every time the sequence (\bluecircle, \redsquare) is observed, the size of the hypothesis set doubles, consistent with the exponential growth expected for an untrackable graph.

To illustrate this, we simulated \num{10000} draws of 50 steps each from an HMM defined by the graph shown in Figure~\ref{sf:base}.
The probabilities of transitions out of each node were set to be equal, and the nodes were taken to be single-colored.
To capture the steady-state behavior, we set the initial state distribution of the HMM to be equal to the equilibrium distribution.
We then used the Viterbi algorithm to reconstruct the node sequence from the color sequence with various lags $\beta$ and record lengths $\gamma$.

The accuracy $\alpha=P(\hat{X}_{t-\beta}=X_{t-\beta})$ (where $\hat{X}_t$ and $X_t$ are the predicted and true nodes at time $t$, respectively) is shown in Figure~\ref{fig:steadyStateContour}a.
The median accuracy for the base graph is 78\%, and drops to 50\% for very short record lengths.
The steady-state (i.e., high-$\gamma$) behavior is shown in Figure~\ref{fig:steadyStateAlpha}.
The tracking accuracy is equally poor at all lags: longer record lengths and/or lags do not help reduce the effects of intersecting cycles with the same coloring.

\begin{figure*}
	\centering
	\makebox[\textwidth][c]{\includegraphics{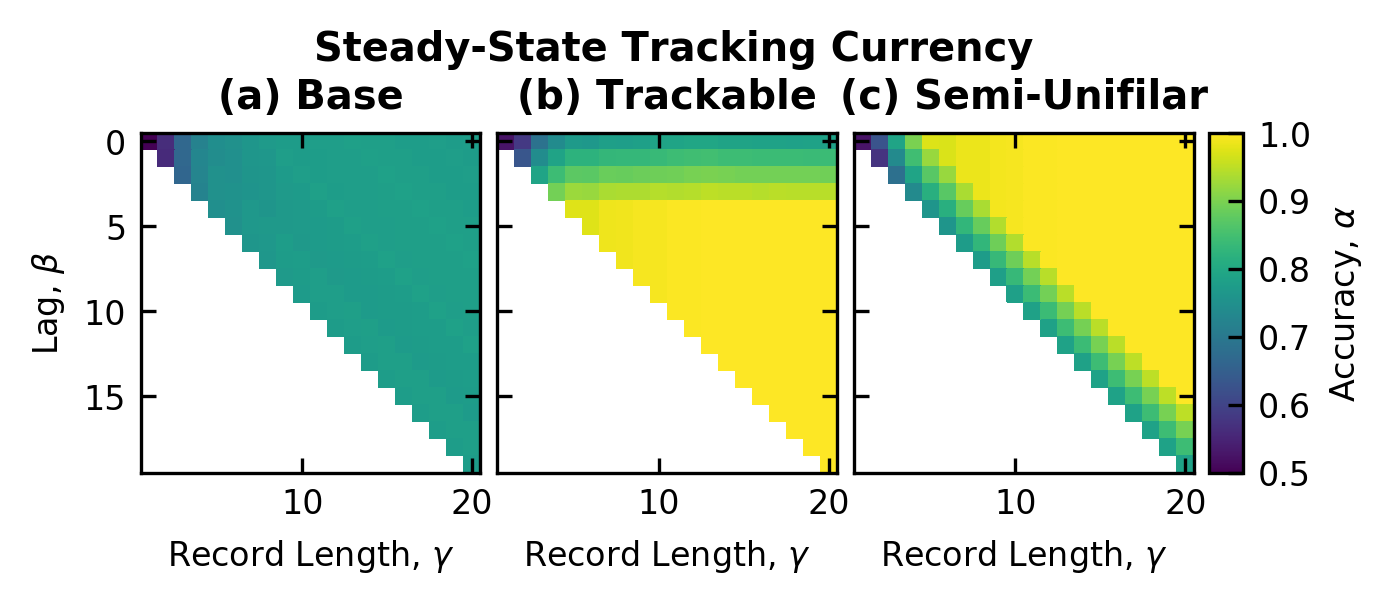}}
	\caption{Accuracy $\alpha=P(\hat{X}_{t-\beta}=X_{t-\beta})$ as a function of lag $\beta$ and record length $\gamma$ for the (a) basic, (b) trackable, and (c) semi-unifilar graphs. The white regions correspond to $\beta \geq \gamma$, for which the behavior is undefined.}
	\label{fig:steadyStateContour}
\end{figure*}

\begin{figure}
	\centering
	\includegraphics{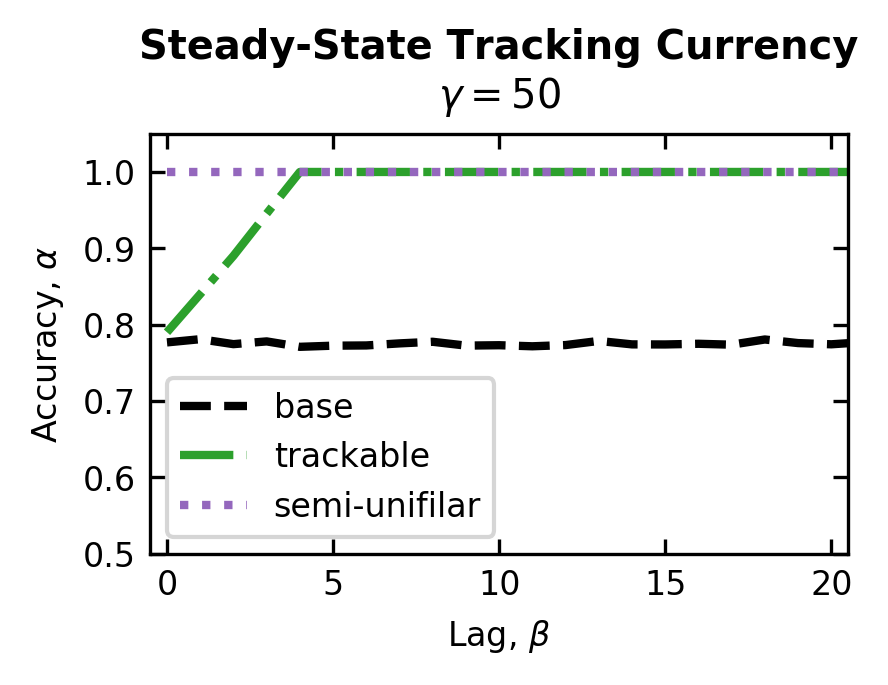}
	\caption{Steady-state (record length $\gamma=50$) tracking accuracy for the basic (black dashed), trackable (green dot-dash) and semi-unifilar (purple dotted) graphs. This is essentially a vertical slice of the data shown in Figure~\ref{fig:steadyStateContour}. The base graph starts around 77\% accuracy and never improves even for very long lags. The trackable graph has 100\% accuracy for lags $\beta\geq4$, but drops to the base 77\% level for shorter lags because of the same-colored out-neighbors of the form (\protect\bluecircle, \protect\redsquare). The semi-unifilar graph has 100\% accuracy for the lags shown, including real-time tracking ($\beta=0$).}
	\label{fig:steadyStateAlpha}
\end{figure}

Now consider the addition of an indicator node to mitigate the intersecting cycles, as shown in Figure~\ref{sf:trackMit}.
The indicator node was placed just \emph{before} the central node to preserve the same-colored out-neighbors: the modified graph is trackable, but not semi-unifilar.
Tracking accuracy for the modified graph (Figure~\ref{fig:steadyStateContour}b) is 100\% for $\beta\geq4$, but the same-colored out-neighbors cause it to drop to around 79\% for $\beta=0$.

Next, consider moving the indicator node so it also mitigates the same-colored out-neighbors, as shown in Figure~\ref{sf:unifMit}.
Tracking accuracy for the modified graph (Figure~\ref{fig:steadyStateContour}c) is 100\% for $\beta<\gamma - 4$, but the accuracy drops for the first four time steps recorded because there is no way to determine which branch an initial sequence of \redsquare came from.

\subsection{Reducing Burn-In Time: Observable}
The graphs considered above contain a pair of separated cycles of the form (\greendiamond, \orangetriangle) which are expected to increase the burn-in time before states can be identified unambiguously.
The stationary distribution for the examples above has most of its mass in the eight \redsquare nodes, which ends up masking this effect.
In order to characterize the effect of separated cycles, we generated another set of \num{10000} realizations, each with 50 time steps.
For this experiment the initial state distribution was set to be uniform over the lower four nodes in Figure~\ref{sf:base} (i.e., the ones which form the separated cycles with the same coloring).

We start with the semi-unifilar graph in Figure~\ref{sf:unifMit} because the effects of the other pathologies have already been shown above.
Tracking accuracy for this case is shown in Figure~\ref{fig:burnInContour}a.
The accuracy starts out poor for all lags $\beta$, but gradually improves as the record length $\gamma$ increases.
The lack of dependence on the lag $\beta$ is expected: the graph is semi-unifilar, so once \bluecircle is observed for the first time unambiguous real-time tracking is guaranteed.
Furthermore, because the observations cannot start at any of the \redsquare nodes, all of the initial states are reconstructed once one of the two sequences (\greendiamond, \bluecircle) or (\orangetriangle, \bluecircle) is observed.

The real-time ($\beta=0$) tracking performance is shown in Figure~\ref{fig:burnInAlpha}.
The tracking accuracy for the semi-unifilar graph asymptotically approaches 100\%.
The time constant of this approach can be controlled by varying the probabilities of transitioning to the \bluecircle node; the same equal probabilities used above were used here.

\begin{figure}
	\centering
	\includegraphics{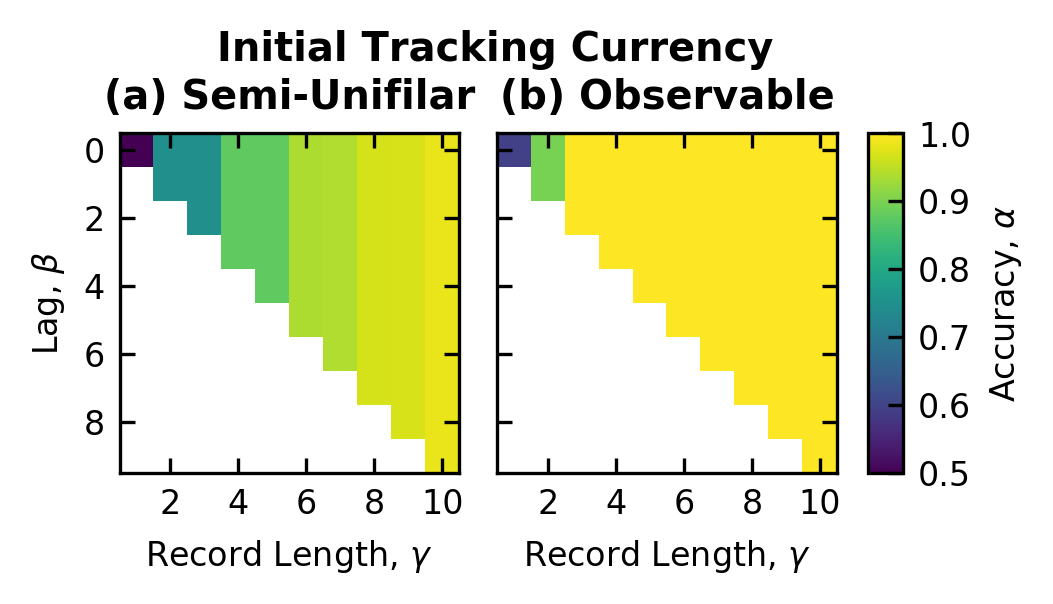}
	\caption{Accuracy $\alpha=P(\hat{X}_{t-\beta}=X_{t-\beta})$ as a function of lag $\beta$ and record length $\gamma$ for the (a) semi-unifilar and (b) observable graphs when observations start while the agent is at either a \protect\greendiamond or \protect\orangetriangle node. The white regions correspond to $\beta \geq \gamma$, for which the behavior is undefined.}
	\label{fig:burnInContour}
\end{figure}

\begin{figure}
	\centering
	\includegraphics{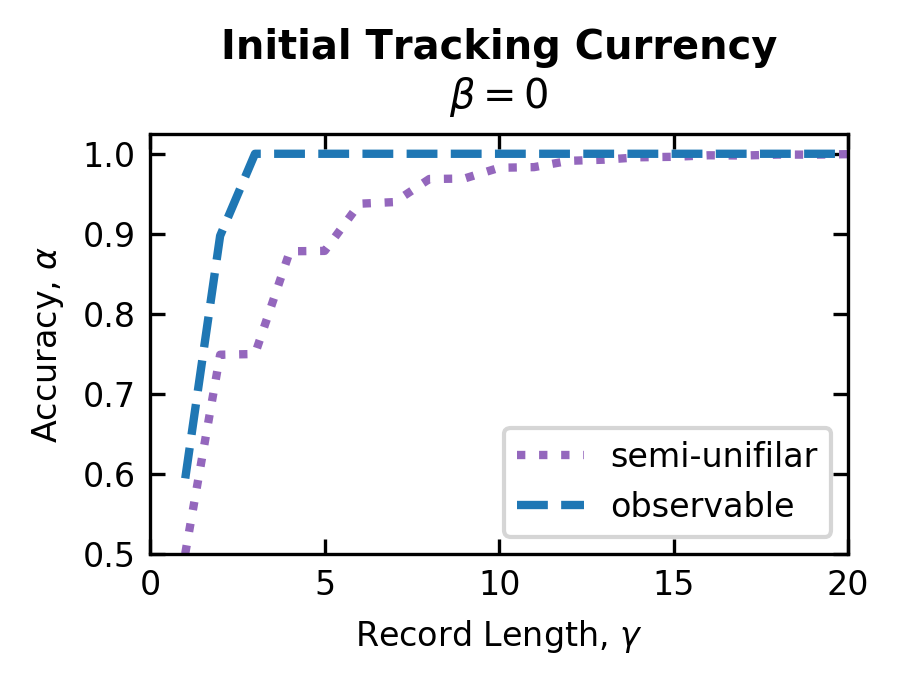}
	\caption{Real-time (lag $\beta=0$) tracking accuracy for the semi-unifilar (dotted purple) and observable (dashed blue) graphs. This is essentially a horizontal slice of the data shown in Figure~\ref{fig:burnInContour}. The semi-unifilar graph asymptotically approaches 100\% accuracy, while the observable graph obtains 100\% accuracy for $\gamma>2$.}
	\label{fig:burnInAlpha}
\end{figure}

Now consider the addition of an indicator node into the lower right (\greendiamond, \orangetriangle) cycle, as shown in Figure~\ref{sf:obsMit}.
The modified graph is observable.
Consistent with this, 100\% tracking accuracy is obtained after a fixed burn-in period of two time steps (see Figure~\ref{fig:burnInContour}b).
For the equal transition probabilities used here, this corresponds to approximately $5{\times}$ faster burn-in compared to the semi-unifilar case.

\section{Conclusions and Future Work}
\label{sec:conc}
This paper has presented a new pathology-based taxonomy of colored graph observability classes which unifies the results of \cite{ShengIEEE2005,CrespiACM2008,JungersDAM2011} into a common framework.
The three colored graph pathologies identified provide an intuitive picture of the differences between the various observability classes, and the expanded concept of $(\alpha,\beta,\gamma)$-currency provides a principled way of reasoning about the effects of the various pathologies.
Numerical experiments have shown the ability to change the observability class of a graph through the addition of indicator nodes, providing a more complete view of this topic than the initial experimental results in \cite{ChilenskiSPIE2018}.
Furthermore, we have shown that at least one form of the Indicator Node Selection Problem is in NP-Complete.
The formulation of the taxonomy has intentionally avoided questions of transition/emission probabilities and initial state distributions so that the results are as general as possible, and hence can be applied to any situation where a hidden state sequence is to be reconstructed from noisy/potentially ambiguous observations.

In terms of possible future research directions, consider the following observations.
There is a large well-known literature on problems of graph colorability.  Loosely speaking, those problems
ask how many colors are required to color nodes in a graph so that no two adjacent nodes have the same color
\cite{jensen2011graph}.  The classic problem in this area is of course the four color problem for planar graphs.

The ``chromatic number'' of a graph is the smallest number of colors that make the graph colorable in the above sense.  Determining the chromatic number is known to be in NP-complete, and therefore the most efficient algorithms currently known are of exponential complexity \cite{garey1976some}.  

By analogy, we can imagine defining a 
``{\em p}-observability number'' of a directed graph as being the smallest number of colors required to make the graph have the {\em p}-observability property, where such a {\em p}-observability property is one of the observability properties discussed in this paper.  Moreover, just as the chromatic polynomial, $P_G(k)$, is a polynomial with
the property that
the number of legal classical colorings of graph $G$ using $k$ colors is precisely $P_G(k)$ (so that
the chromatic number of $G$ is the smallest $k$ for which $P_G(k)>0$), we imagine there might be 
a ``{\em p}-observability polynomial'' with the similar property for the ``{\em p}-observability'' of a graph.
Such a ``{\em p}-observability polynomial'' would inform us about how difficult or easy
realizing the ``{\em p}-observability property'' would be for a specific graph.

To illustrate the intuition of how observability relates to chromatic numbers, consider the following
construct.  Given a directed uncolored graph $G=(V,E)$, create the auxiliary graph $A(G)=(V_A,E_A)$ for which
\begin{gather}
V_A = \{ (v_i,v_j) | v_i \neq v_j \}
\end{gather}
and
\begin{gather}
	E_A = \{ \big((v_i,v_j), (v_k,v_l)\big) | (v_i,v_k), (v_j,v_l) \in E\}.
\end{gather}
Note that $A$ is a supergraph of the auxiliary graph $G^2$ which is used to characterize partly \ap observability.
In particular, the cycles in $A$ are the non-intersecting closed paths of $G$ that are \emph{potentially} indistinguishable from each other.
Now select a node from every cycle, $c$, in $A$, which is essentially just a pair $(v_c,v'_c)$ of nodes from $G$.
Consider the graph $B = (V,E_B)$ with the same nodes as $G$ and with edges $(v_c,v'_c)$ 
as selected from the cycles of $A$.
$B$ is made undirected through inclusion of the edge $(v'_c, v_c)$ whenever the edge $(v_c, v'_c)$ has been included.

Observe that every traditional coloring of $B$ (i.e., no neighboring nodes have the same color) leads to a 
corresponding coloring of $A$ that has no same colored cycles because of how
we constructed $B$.  Thus the chromatic number of $B$ has a clear relationship
to the number of ways $G$ could be colored to be observable 
through such a selection of nodes from the cycles in $A$.
Note, however, that the construction of $B$ is not unique: there can be multiple ways of selecting nodes from the cycles in $A$ which lead to different chromatic numbers for $B$.
An example of this construction applied to the graph in Figure~\ref{fig:sepCyc} is given in Figure~\ref{fig:ABexample}.
\begin{figure}
	\centering
	\subfloat[Auxiliary graph $A$]{\includegraphics[scale=0.125]{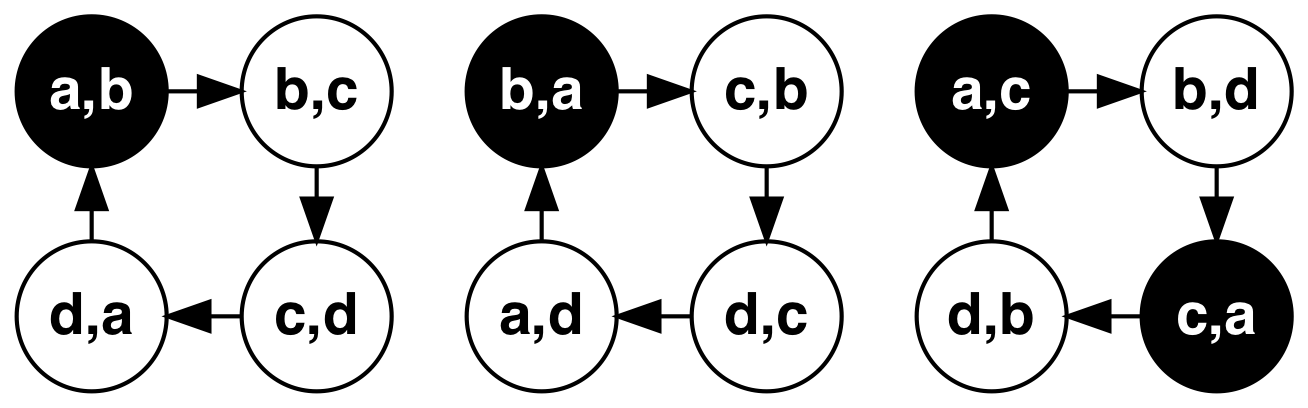}}%
	\quad\quad
	\subfloat[Auxiliary graph $B$]{\includegraphics[scale=0.125]{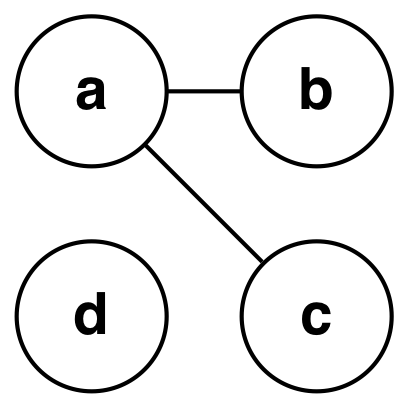}}%
	
	\caption{Auxiliary graphs $A$ and $B$ for the graph in Figure~\ref{fig:sepCyc}. Note that $A$ is not a colored graph, but the nodes of $A$ which were selected from each cycle to construct $B$ are shown in black here. $B$ is two-colorable, which shows that the original graph can be made partly \ap observable with no more than two colors.}
	\label{fig:ABexample}
\end{figure}

This by no means proves anything about any ``$p$-observability'' polynomial but
it does sketch out a concrete relationship between chromatic numbers and observability
that could provide insight into future work along these lines.

\section*{Acknowledgements}
Distribution Statement ``A'' (Approved for Public Release, Distribution Unlimited)

The research effort depicted was sponsored by the Air Force Research Laboratory (AFRL) and the Defense Advanced Research Projects Agency (DARPA) under the Leveraging the Analog Domain for Security (LADS) program under contract number FA8650-16-C-7622. In particular, we thank Dr.\ Angelos Keromytis and Mr.\ Ian Crone, the past and present DARPA program managers of LADS, for their encouragement and support throughout the program. 

This research was developed with funding from the Defense Advanced Research Projects Agency (DARPA).

The views, opinions and/or findings expressed are those of the author and should not be interpreted as representing the official views or policies of the Department of Defense or the U.S.\ Government.

\bibliographystyle{IEEEtran}
\bibliography{taxonomy}

\begin{thebibliography}{10}
\providecommand{\url}[1]{#1}
\csname url@samestyle\endcsname
\providecommand{\newblock}{\relax}
\providecommand{\bibinfo}[2]{#2}
\providecommand{\BIBentrySTDinterwordspacing}{\spaceskip=0pt\relax}
\providecommand{\BIBentryALTinterwordstretchfactor}{4}
\providecommand{\BIBentryALTinterwordspacing}{\spaceskip=\fontdimen2\font plus
\BIBentryALTinterwordstretchfactor\fontdimen3\font minus
  \fontdimen4\font\relax}
\providecommand{\BIBforeignlanguage}[2]{{%
\expandafter\ifx\csname l@#1\endcsname\relax
\typeout{** WARNING: IEEEtran.bst: No hyphenation pattern has been}%
\typeout{** loaded for the language `#1'. Using the pattern for}%
\typeout{** the default language instead.}%
\else
\language=\csname l@#1\endcsname
\fi
#2}}
\providecommand{\BIBdecl}{\relax}
\BIBdecl

\bibitem{ShengIEEE2005}
Y.~Sheng and G.~V. Cybenko, ``Distance measures for nonparametric weak process
  models,'' in \emph{Systems, Man and Cybernetics, 2005 {IEEE} International
  Conference on}.\hskip 1em plus 0.5em minus 0.4em\relax IEEE, 2005.

\bibitem{CrespiACM2008}
V.~Crespi, G.~Cybenko, and G.~Jiang, ``The theory of trackability with
  applications to sensor networks,'' \emph{ACM Transactions on Sensor Networks
  (TOSN)}, vol.~4, no.~3, p.~16, 2008.

\bibitem{JungersDAM2011}
R.~M. Jungers and V.~D. Blondel, ``Observable graphs,'' \emph{Discrete Applied
  Mathematics}, vol. 159, no.~10, pp. 981--989, 2011.

\bibitem{Beal2011}
M.-P. B\'{e}al, J.~Berstel, S.~Eilers, and D.~Perrin, ``Symbolic dynamics,''
  Preprint: \url{https://arxiv.org/abs/1006.1265}, 2011.

\bibitem{OzverenIEEE1990}
C.~\"{O}zveren and A.~S. Willsky, ``Observability of discrete event dynamic
  systems,'' \emph{{IEEE} Transactions on Automatic Control}, vol.~35, no.~7,
  pp. 797--806, July 1990.

\bibitem{SabooriIEEE2007}
A.~Saboori and C.~N. Hadjicostis, ``Notions of security and opacity in discrete
  event systems,'' in \emph{Proceedings of the 46th {IEEE} Conference on
  Decision and Control}, 2007, pp. 5056--5061.

\bibitem{BryansIJIS2008}
J.~Bryans, M.~Koutny, L.~Mazare, and P.~Ryan, ``Opacity generalised to
  transition systems,'' \emph{International Journal of Information Security},
  vol.~7, no.~6, pp. 421--435, November 2008.

\bibitem{SabooriThesis2010}
A.~Saboori, ``Verification and enforcement of state-based notions of opacity in
  discrete event systems,'' Ph.D. dissertation, University of Illinois at
  Urbana-Champaign, 2010.

\bibitem{kearns1994cryptographic}
M.~Kearns and L.~Valiant, ``Cryptographic limitations on learning boolean
  formulae and finite automata,'' \emph{Journal of the ACM (JACM)}, vol.~41,
  no.~1, pp. 67--95, 1994.

\bibitem{JiangSPIE2004}
G.~Jiang, ``Weak process models for robust process detection,'' in
  \emph{Sensors, and Command, Control, Communications, and Intelligence ({C3I})
  Technologies for Homeland Security and Homeland Defense {III}}, ser. Proc.\
  SPIE, vol. 5403, 2004.

\bibitem{RabinerIEEE1989}
L.~R. Rabiner, ``A tutorial on hidden {M}arkov models and selected applications
  in speech recognition,'' \emph{Proceedings of the IEEE}, vol.~77, no.~2, pp.
  257--286, 1989.

\bibitem{ChilenskiSPIE2018}
M.~Chilenski, G.~Cybenko, I.~Dekine, P.~Kumar, and G.~Raz, ``Control flow graph
  modifications for improved {RF}-based processor tracking performance,'' in
  \emph{Cyber Sensing 2018}, ser. Proc.\ SPIE, vol. 10630, 2018, p. 106300I.

\bibitem{cybenko2018large}
G.~Cybenko and G.~M. Raz, ``Large-scale analogue measurements and analysis for
  cyber-security,'' \emph{Data Science For Cyber-security}, vol.~3, p. 227,
  2018.

\bibitem{BerkSPIE2005}
V.~Berk and N.~Fox, ``Process query systems for network security monitoring,''
  in \emph{Sensors, and Command, Control, Communications, and Intelligence
  ({C3I}) Technologies for Homeland Security and Homeland Defense {IV}}, ser.
  Proc.\ SPIE, vol. 5778, 2005.

\bibitem{ShengThesis2006}
Y.~Sheng, ``The theory of trackability and robustness for process detection,''
  Ph.D. dissertation, Dartmouth College, 2006.

\bibitem{CybenkoC2007}
G.~Cybenko and V.~H. Berk, ``Process query systems,'' \emph{Computer}, vol.~40,
  no.~1, pp. 62--70, January 2007.

\bibitem{BrewingtonCN2000}
B.~E. Brewington and G.~Cybenko, ``How dynamic is the {W}eb?'' \emph{Computer
  Networks}, vol.~33, no. 1-6, pp. 257--276, 2000.

\bibitem{BrewingtonC2000}
------, ``Keeping up with the changing web,'' \emph{Computer}, vol.~33, no.~5,
  pp. 52--58, 2000.

\bibitem{BrewingtonThesis2000}
B.~E. Brewington, ``Observation of changing information sources,'' Ph.D.
  dissertation, Dartmouth College, June 2000.

\bibitem{jensen2011graph}
T.~R. Jensen and B.~Toft, \emph{Graph Coloring Problems}.\hskip 1em plus 0.5em
  minus 0.4em\relax John Wiley \& Sons, 2011, vol.~39.

\bibitem{garey1976some}
M.~R. Garey, D.~S. Johnson, and L.~Stockmeyer, ``Some simplified {NP}-complete
  graph problems,'' \emph{Theoretical Computer Science}, vol.~1, no.~3, pp.
  237--267, 1976.

\bibitem{garey2002computers}
M.~R. Garey and D.~S. Johnson, \emph{Computers and Intractability}.\hskip 1em
  plus 0.5em minus 0.4em\relax W.H. Freeman New York, 2002, vol.~29.

\end{thebibliography}

\begin{IEEEbiographynophoto}{Mark Chilenski}
received the BS degree in aeronautical and astronautical engineering from the University of Washington in 2010 and the PhD degree in nuclear science and engineering from the Massachusetts Institute of Technology in 2016. He is a senior scientist at Systems \& Technology Research LLC. His research interests include machine learning, Bayesian inference, and cybersecurity.
\end{IEEEbiographynophoto}

\begin{IEEEbiographynophoto}{George Cybenko}
received his B.Sc. and Ph.D.
degrees in Mathematics from the University of Toronto and Princeton. He
is currently the Dorothy and Walter Gramm Professor
of Engineering at Dartmouth. His research interests include cyber
security, advanced machine learning algorithms and information
deception.
\end{IEEEbiographynophoto}

\begin{IEEEbiographynophoto}{Isaac Dekine}
received the BS and MS degrees in electrical and computer engineering from Carnegie Mellon University in 2006. He is a senior engineer at Systems \& Technology Research LLC. His research interests include RF system design, signal processing, and cybersecurity.
\end{IEEEbiographynophoto}

\begin{IEEEbiographynophoto}{Piyush Kumar}
received the Master of Science degree in Physics from the Indian Institute of Technology Kharagpur in 2001, MS degree in Physics from the University of Chicago in 2004 and the PhD degree in Physics from the University of Michigan Ann Arbor in 2007. He is a lead scientist at Systems \& Technology Research LLC. His research interests include applications of probabilistic methods to problems in physics and engineering, machine learning, and graph theory.
\end{IEEEbiographynophoto}

\begin{IEEEbiographynophoto}{Gil Raz}
received a bachelor's degree in electrical engineering from the Technion -- Israel Institute of Technology in 1988 and a PhD in electrical engineering (minor in mathematics) from the University of Wisconsin -- Madison in 1998. He is a chief scientist at Systems \& Technology Research LLC. His research interests include applied mathematics and statistics for solving problems in multiple application areas.
\end{IEEEbiographynophoto}

\clearpage

\section*{Appendix: Proof of Theorem~\ref{INSP-theorem}}
This appendix provides a proof of Theorem~\ref{INSP-theorem}.
Recall the definition of the Indicator Node Selection Problem from Section~\ref{sec:INSP}:
\begin{quote}
	{\em Indicator Node Selection Problem}: Given a node-colored directed graph, $G=(V,E,L,\Phi)$, and a subset of edges, $F \subseteq E$, can indicator nodes be added to some edges in $F$ so that the resulting graph is partly \ap observable?
\end{quote}

Using this definition, we can then state the following theorem:
\setcounter{theorem}{0}
\begin{theorem}
	The Indicator Node Selection Problem is in NP-Complete. \label{INSP-theorem-app}
\end{theorem}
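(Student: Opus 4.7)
The plan is to establish the two standard halves of NP-completeness: (i) membership in NP, and (ii) NP-hardness by reduction from a known NP-complete problem. Membership is the easy half, so I would dispose of it first.

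For membership in NP, the obvious certificate is a subset $S \subseteq F$ of edges on which to insert indicator nodes. Given $S$, I would construct the modified graph $G'$ in time $O(|V|+|E|+|S|)$, build the auxiliary graph $G'^2$ from Section~\ref{sec:taxon} (whose size is $O(|V|^2)$ with $O(|E|^2)$ edges), and run a topological sort on $G'^2$ to test for acyclicity. By the characterization of Jungers and Blondel cited in Region~II, $G'$ is partly \ap observable iff $G'^2$ is acyclic, so the full verification is polynomial in the input size. I would also verify that the colors of the indicator nodes do not matter, by noting that the membership argument works equally well whether the indicator colors are chosen from $\Phi$ or are fresh.

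The harder direction is NP-hardness. I would reduce from a compact combinatorial selection problem such as \textsc{Vertex Cover} (or \textsc{3-SAT}), since the target problem is essentially: ``choose a minimum-witness subset $S \subseteq F$ that destroys every cycle in $G^2$.'' This smells like a hitting-set/covering reduction. My plan would be to build, for each vertex $v$ in the \textsc{Vertex Cover} instance $H=(V_H,E_H)$, a ``toggle edge'' $e_v\in F$ in a gadget such that augmenting $e_v$ with an indicator node corresponds to ``picking'' $v$ into the cover. Then, for each edge $\{u,v\}\in E_H$, I would construct a gadget producing a pair of separated same-colored cycles in $G$ whose corresponding cycle in $G^2$ is destroyed exactly when at least one of $e_u$, $e_v$ is augmented. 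All other edges would be placed outside $F$ (forced-unaugmented), ensuring the adversary has no freedom except on the $e_v$. A size parameter (number of allowed augmentations) would be enforced by a standard gadget that forces $|S|\le k$, e.g.\ by attaching disjoint ``penalty cycles'' each paired with a distinct edge outside $F$ so that exceeding the budget reintroduces unavoidable cycles. A polynomial-time correctness argument would then show: $H$ has a vertex cover of size $\le k$ iff $G$ can be made partly \ap observable by indicator insertions on some subset of $F$.

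The main obstacle will be designing the edge-gadget so that (a) it produces exactly one essential cycle in $G^2$ rather than an uncontrolled proliferation of auxiliary cycles, (b) augmenting any single $e_u$ or $e_v$ in the gadget breaks that cycle (by shifting the parity/length of one branch of the separated pair so the two paths no longer emit matching colors in lock-step), and (c) augmenting edges intended for other gadgets does not accidentally repair or create cycles in this one. Achieving modularity typically requires each gadget to use locally fresh colors and careful alignment of cycle lengths so that the separated-cycle condition $\pi_1(i)\neq\pi_2(i)$ with matching colors holds precisely when no toggle edge has been augmented. Once these gadgets are verified in isolation and shown to compose without cross-interference, the equivalence of the two decision problems is immediate, completing the reduction.
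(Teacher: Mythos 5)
Your membership-in-NP argument is fine: the subset $S\subseteq F$ is a polynomial-size certificate, and acyclicity of the auxiliary graph $(G')^2$ can be checked in polynomial time via the Region~II characterization of Jungers and Blondel, so verification is polynomial. The paper does not belabor this half either.

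The hardness half has a genuine gap, located exactly where you flagged ``a standard gadget that forces $|S|\le k$.'' The Indicator Node Selection Problem as stated carries no budget: it asks only whether \emph{some} subset of $F$ can be augmented. Vertex Cover is trivial without its cardinality bound (take all vertices into the cover), so your reduction is forced to encode the bound $k$ inside the graph. But inserting an indicator node has purely local effects, confined to paths passing through that one edge; there is no mechanism by which ``too many total insertions across the whole instance'' can reintroduce a separated-cycle pathology in an unrelated penalty gadget---particularly one whose controlling edge you have placed \emph{outside} $F$ and therefore frozen. You give no construction for this counting gadget, and it is not clear one can exist. The paper sidesteps the issue entirely by reducing from the Monochromatic-Triangle Problem (two-color the edges of an undirected graph so that no triangle is monochromatic), which is NP-complete and whose constraint is a budget-free, purely local \emph{not-all-equal} condition. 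Each triangle of the source graph becomes a gadget with three ``real edges''; the presence or absence of an indicator node on a real edge encodes the two edge colors; a pigeonhole argument (three parallel paths through three copies of the triangle node, but only two possible connector-edge configurations) shows that if all three real edges agree then two disjoint, identically colored paths must exist, and these are closed into a pair of separated same-colored cycles by routing through linear color arrays and binary identification trees. To salvage your approach you would need to replace Vertex Cover with a source problem whose feasibility is expressible by such local not-all-equal constraints; with a cardinality-bounded source, the reduction as sketched does not go through.
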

\begin{proof}
The idea for the reduction used here was inspired by
earlier work showing that edge coloring a graph to make it
partly observable with a minimal number of colors is in NP-Complete \cite{JungersDAM2011}.

Our proof is also based on a reduction from the Monochromatic-Triangle Problem
which is known to be in NP-Complete \cite{garey2002computers} but uses a different reduction mapping.
The Monochromatic-Triangle Problem takes as input an undirected graph, $G=(V,E)$,
and asks whether there exists a two-coloring of the edges in $E$ so that no triangle in $G$
has all edges the same color (that is, no triangle is monochromatic).

We will develop the argument based on several steps.  Each step involves
a stage in the construction of a directed node-colorable graph $G'$ from $G$ with the
property that there is a positive answer to the {\em Indicator Node Selection Problem} for $G'$ if and only if
$G$ can be edge two-colored so that no triangle has all edges the same color.

{\em Step 1:}  Start by enumerating all triangles in $G$, say $T_1, T_2, ... , T_S$,
and creating nodes in $G'$ for each triangle.  Label these nodes by the triangle they
represent. Next, for each edge in $G$
create a unique pair of nodes in $G'$ and a directed edge between those nodes.
We call these edges ``real edges.''

Then for each triangle node add a directed edge to the tails of the
node-edge combinations created for each of its three edges.  
We call these edges ``connector'' edges.

A depiction of this construct is shown in Figure~\ref{fig:gcfig1}
for one triangle and Figure~\ref{fig:gcfig2} for two triangles.
These figures also show the insertion of an indicator node on one
of the ``real triangle edges.''  This indicator node is a surrogate for coloring
the corresponding edge in $G$.  Note that the underlying $G$ 
 can be edge two-colored so that no triangle has all edges the same color
 if and only if not all three real edges either have or do not have an
 indicator node inserted. The example constructs in Figures \ref{fig:gcfig1} and
\ref{fig:gcfig2} indicate that at least triangle $T_i$ has one edge colored
differently from the other two edges.

Note that this construct is polynomial in the size of $G$.

\begin{figure}[h]
	\centering
	\includegraphics[width=2.5in,keepaspectratio]{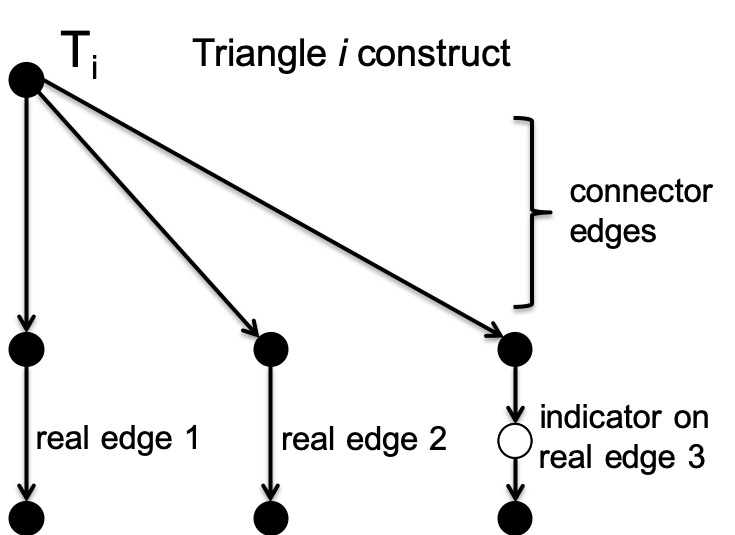}
	\caption{The construct for a single triangle}
	\label{fig:gcfig1}
\end{figure}

\begin{figure}[h]
\begin{center}
 \includegraphics[width=3in,keepaspectratio]{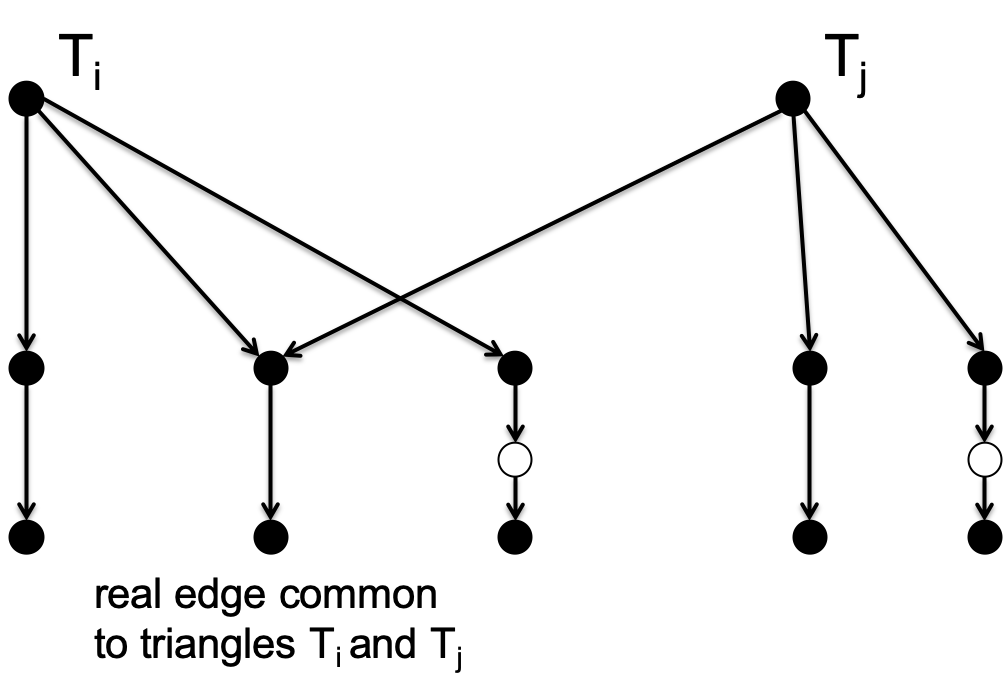}
 \caption{The construct for two triangles sharing an edge.}
 \label{fig:gcfig2}
 \end{center}
\end{figure}

{\em Step 2:}  For each triangle $T_i$,  create two
additional copies of that triangle node resulting in a total of three copies,
$T_{i1}, T_{i2}, T_{i3}$, with those copies all pointing to to the starts of the ``real edges''
in the underlying triangle.

\begin{figure}[h]
\begin{center}
 \includegraphics[width=3in,keepaspectratio]{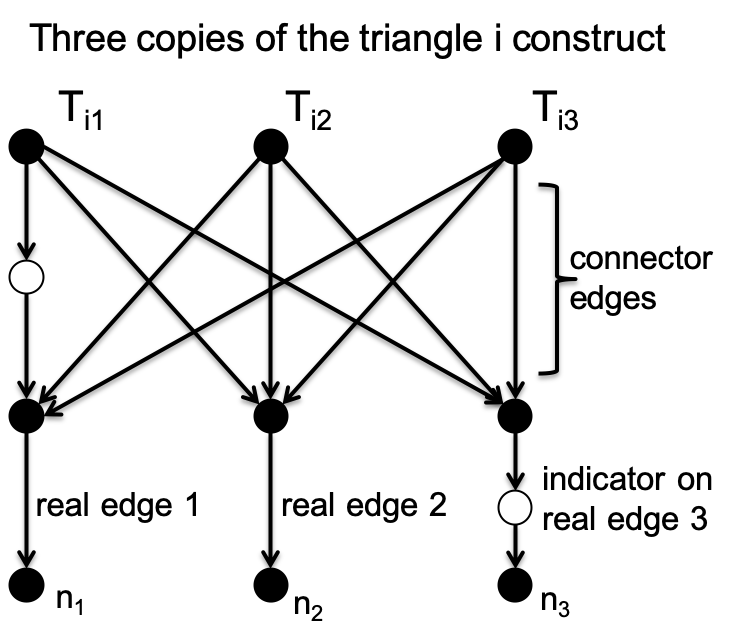}
 \caption{Repeat the nodes for a single triangle to create three copies.}
 \label{fig:gcfig5}
 \end{center}
\end{figure}

A depiction of this construct is shown in Figure~\ref{fig:gcfig5}.
Note that this part of the construct is still polynomial in the size of $G$.

The edges in this constructed graph together with the corresponding graphs
constructed for all other triangles in $G$ form the subset $F$
of the ultimate graph $G'$ which are candidates for insertion
of indicator nodes.  Figure~\ref{fig:gcfig5} shows two edges having
indicator nodes, as an example.

Now consider the paths between the three nodes $T_{i1}, T_{i2},T_{i3}$ corresponding to 
a single triangle in $G$ and the nodes at the bottom of
the real edges, labelled $n_1,n_2$ and $n_3$ in Figure~\ref{fig:gcfig5}, for triangle $T_i$.
Each path starts with a connector edge followed by a real edge.
These edges can have indicator nodes on them or not.

The fundamental observation that is key for this reduction
is that if all three real edges of $T_i$ are colored the same (that is, all have
indicator nodes or none have indicator nodes) then there are two disjoint paths
from two of the $T_{ij}$ to two of the nodes  $n_1,n_2$ and $n_3$, for any combination of
indicator nodes on the connector edges, that have the same combination of indicator nodes.

Said another way, if all real edges either have or do not have indicator nodes,
then there are two indistinguishable paths from two distinct triangle copy nodes, $T_{ij}$,
to two distinct $n_1,n_2,n_3$.  To see this, consider the three paths from 
each $T_{ij}$ to $n_j$ going straight down in Figure~\ref{fig:gcfig5}.  Because
all real edges are the same in terms of indicator nodes, there are only two possibilities
for the connector edges but there are three paths.  So at least two of these paths
must have the same arrangement of indicator nodes.

Conversely, if we have a triangle in which at least one, but not all, real edges have
an indicator node, then we can assign indicator nodes on the connector edges so
that a unique $n_i$ can be identified by any path from any of the $T_{ij}$ down.
For example, Figure~\ref{fig:gcfig6} shows such an assignment.

\begin{figure}[h]
\begin{center}
 \includegraphics[width=3in,keepaspectratio]{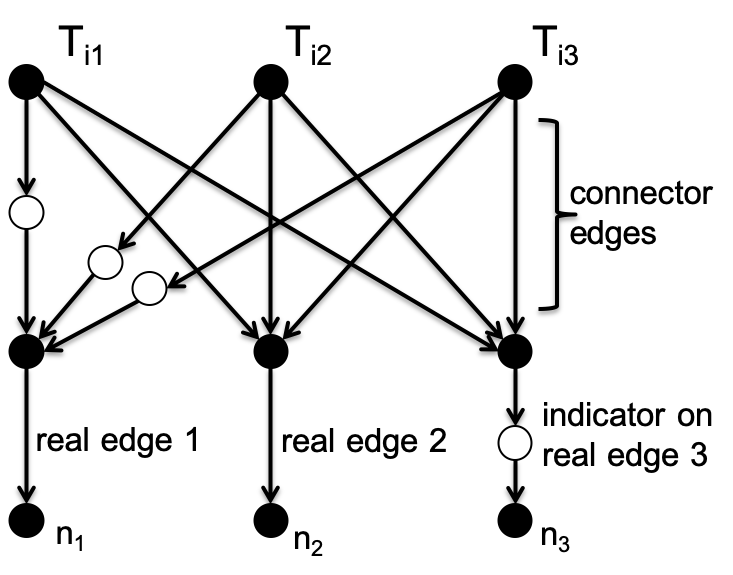}
 \caption{If not all real edges in a triangle are the same with respect to having indicator nodes or not,
 then the connector edges can be assigned indicator nodes to make the terminal
 nodes $n_k$ uniquely identifiable given any path from one of the $T_{ij}$ to $n_k$.}
  \label{fig:gcfig6}
 \end{center}
\end{figure}

{\em Step 3:}  Now that we have constructed the subgraph $F$ which arises in the
Indicator Node Selection Problem, we need to construct the rest
of the graph $G'$ so that indicator nodes can be assigned to $F$
to make the overall graph $G'$ partly observable if and only if the edges in the
underlying  $G$ can be two-colored so that no triangle in $G$ has all
edges the same color.

Recall that $G$ has $S$ triangles.  Construct a binary tree that has
$S$ leaves with a node coloring such that a left node child is colored black and
a right node child is colored white and the paths from root to leaves are all
the same length.  This can be done simply by creating a binary tree
with $2^{\lceil \log_2 S \rceil}$ leaves and using only $S$ of the leaves.

Now make a one-to-one assignment of the $S$ leaves in the tree to the
$S$ triangles in $G$.  A node color sequence from the root of the tree
to a leaf uniquely identifies a triangle in $G$.  Create three copies of this
tree and connect leaves to the three copies of the triangle nodes.

Note that a node color sequence from the root of any tree down to the
triangle nodes $T_{ij}$ uniquely identifies $i$ (which triangle is at
the leaf) but not $j$ (which copy of that triangle).

\begin{figure*}[h]
\begin{center}
 \includegraphics[width=6in,keepaspectratio]{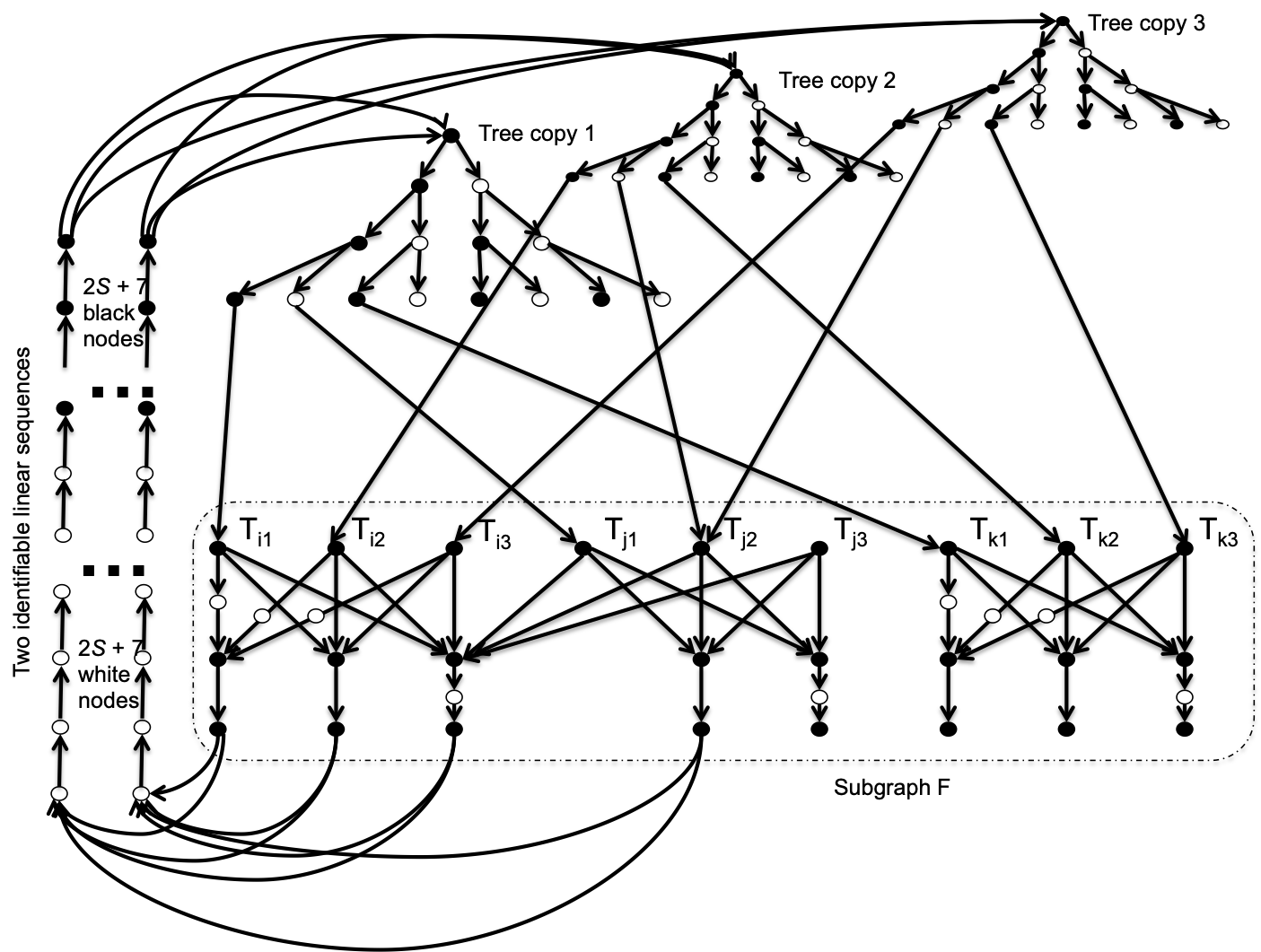}
 \caption{The final construction of $G'$ with the subgraph $F$ for insertion of
 indicator nodes enclosed in the dashed rectangle.  }
 \label{fig:gcfig4}
 \end{center}
\end{figure*}

{\em Step 4:}  The final step involves connecting the bottom nodes $n$
in $F$ to the roots of the three trees.  For this, create two separate linear
arrays starting with $2S+7$ white nodes followed by $2S+7$ black nodes.
Connect the bottom nodes to start of each of the linear arrays and the end
nodes of the linear arrays to each of the three tree copies.  This construct is
depicted in Figure~\ref{fig:gcfig4}.

Now consider a path through this constructed $G'$ and observe the colors
which are either black or white along the path.  When we see a sequence of
$2S+7$ whites followed by $2S+7$ blacks, we know we just traversed
one of the linear arrays on the left of Figure~\ref{fig:gcfig4} because
that is the only place where such a color sequence can occur.

After the  $2S+7$ blacks, we enter one of the three binary trees but we
do not know which one of the trees.  We traverse the tree and because we
know there are ${\lceil \log_2 S \rceil}$ edges on all paths in all trees,
we know when we reach one of the triangle nodes and which triangle
because of the binary encoding but we do not know which one
of the three triangle nodes  $T_{i1}, T_{i2},T_{i3}$ we have reached because
we do not know which copy of the tree we traversed.

Recall that we constructed $F$ in such a way that if all real edges
of a triangle either all have indicator nodes or none have indicator nodes, then
there are must be two disjoint paths from two of the $T_{ij}$ to two of the
$n_j$ nodes for that triangle that are identically colored.  We close
those disjoint paths by passing to two different linear arrays and then
passing from the linear arrays to the two trees corresponding to the
two triangle node copies that started the disjoint paths identified above.

This creates two separated cycles with the same coloring and shows that any
assignment of indicator nodes to edges within $F$ that results in
all three real edges of any triangle being the same means that the
resulting $G'$ is \emph{not} partly \ap observable.

Moreover, an assignment of indicator nodes to $F$ for which
all real edges of a triangle in $G$ have indicator nodes or not
means that all edges of that triangle in $G$ are colored the same.

On the other hand, suppose that the constructed graph $G'$
has an assignment of indicator nodes to the edges in $F$ that
makes $G'$ partly \ap observable.  This means that all same colored
paths must eventually intersect which in turn implies that 
not all three real edges corresponding to any triangle in $F$ can all have indicator nodes
or not.  That implies that the original graph $G$ can have edges
colored so that no triangle has all edges the same color.

This completes the proof of Theorem~\ref{INSP-theorem}.
\end{proof}

\end{document}